\def\eqref#1{equation~\ref{#1}}
\def\1{\bm{1}}
\DeclareMathAlphabet{\mathsfit}{\encodingdefault}{\sfdefault}{m}{sl}
\SetMathAlphabet{\mathsfit}{bold}{\encodingdefault}{\sfdefault}{bx}{n}
\newcommand{\E}{\mathbb{E}}
\DeclareMathOperator*{\argmax}{arg\,max}
\DeclareMathOperator*{\argmin}{arg\,min}
\newtheorem{theorem}{THEOREM}
\newtheorem{definition}[theorem]{DEFINITION}
\newcommand\Mycomb[2][^n]{\prescript{#1\mkern-0.5mu}{}C_{#2}}
  \providecommand\BibTeX{{%
    \normalfont B\kern-0.5em{\scshape i\kern-0.25em b}\kern-0.8em\TeX}}}
\begin{document}

\title{Measuring the Non-Transitivity in Chess}


\author{Ricky Sanjaya}
\affiliation{%
  \institution{University College London}
  \country{rickysanjaya008@gmail.com}}

\author{Jun Wang}
\affiliation{%
  \institution{University College London}
  \country{jun.wang@ucl.ac.uk}}

\author{Yaodong Yang}
\authornote{Corresponding author}
\affiliation{%
  \institution{King's College London}
  \country{yaodong.yang@kcl.ac.uk}
}

\renewcommand{\shortauthors}{Trovato and Tobin, et al.}

\begin{abstract}

It has long been believed that Chess is the \emph{Drosophila} of Artificial Intelligence (AI). Studying Chess can productively provide valid knowledge about complex systems. Although remarkable progress has been made on solving Chess, the geometrical  landscape of Chess in the strategy space is still mysterious. Judging on AI-generated strategies,  researchers hypothesised that the strategy space of Chess possesses a spinning top geometry, with the upright axis representing the \emph{transitive} dimension (e.g., A beats B, B beats C, A beats C), and the radial axis representing the \emph{non-transitive} dimension (e.g., A beats B, B beats C, C beats A). However, it is unclear whether such a hypothesis holds for real-world strategies. In this paper, we quantify the non-transitivity in Chess through real-world data from human players. Specifically, we performed two ways of non-transitivity quantifications---Nash Clustering and counting the number of Rock-Paper-Scissor cycles---on over one billion match data from Lichess and FICS. Our findings positively indicate that the strategy space occupied by real-world Chess strategies demonstrates a spinning top geometry, and more importantly, there exists a strong connection between the degree of non-transitivity and the progression of a Chess player's rating. In particular, high degrees of non-transitivity tend to prevent human players from making progress on their Elo rating, whereas progressions are easier to make at the level of ratings where the degree of non-transitivity is lower. Additionally, we also investigate the implication of  the degree of non-transitivity for  population-based training methods. By considering \emph{fixed-memory Fictitious Play} as a proxy, we reach the conclusion that maintaining large-size and diverse populations of strategies is imperative to training effective AI agents in solving Chess types of games. 
\end{abstract}

%
%



\begin{teaserfigure}
\vspace{-0pt}
\subfloat[]{
  \includegraphics[width= 0.24\textwidth]{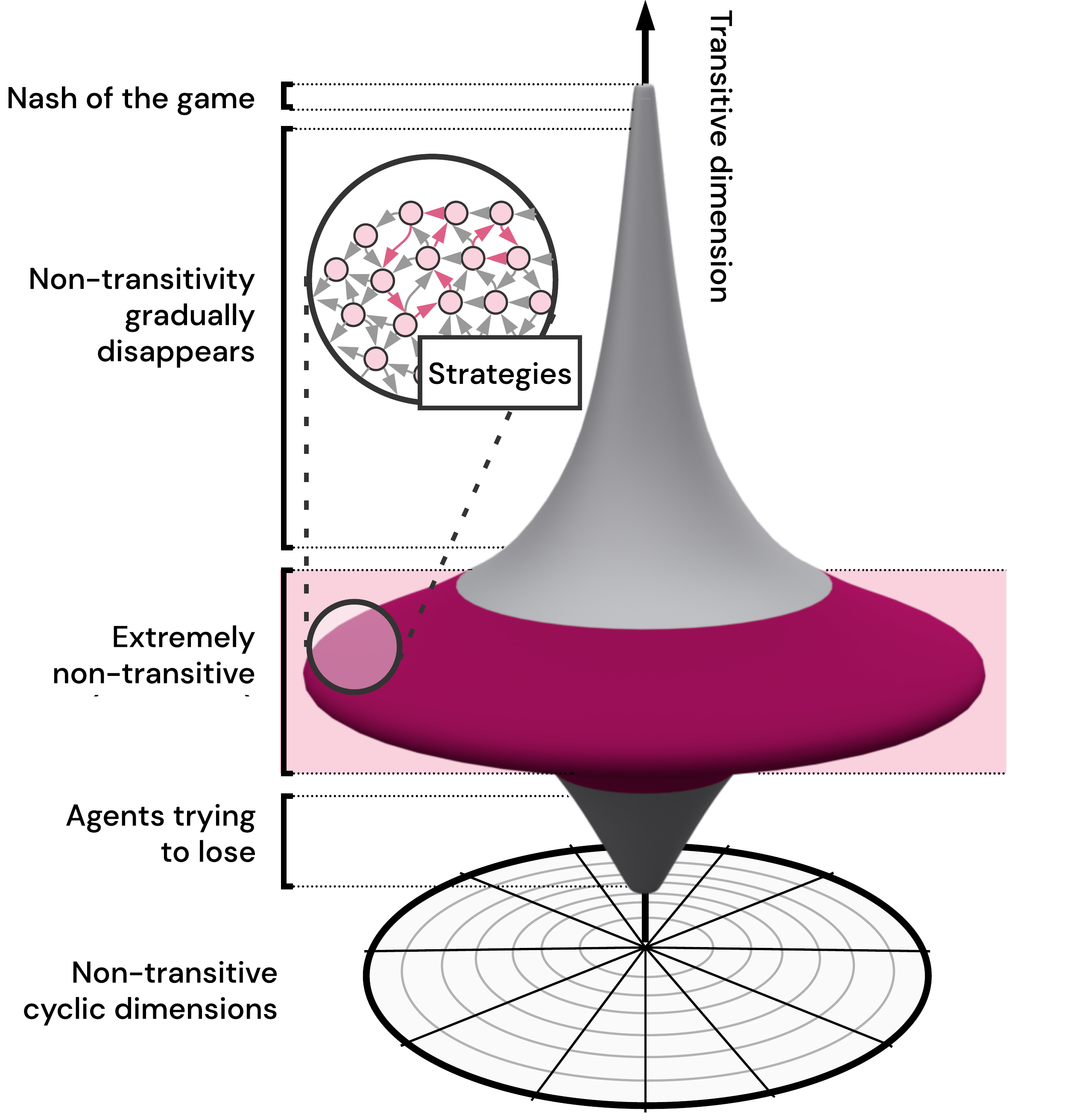}} 
  \subfloat[]{
  \includegraphics[width= 0.47\textwidth]{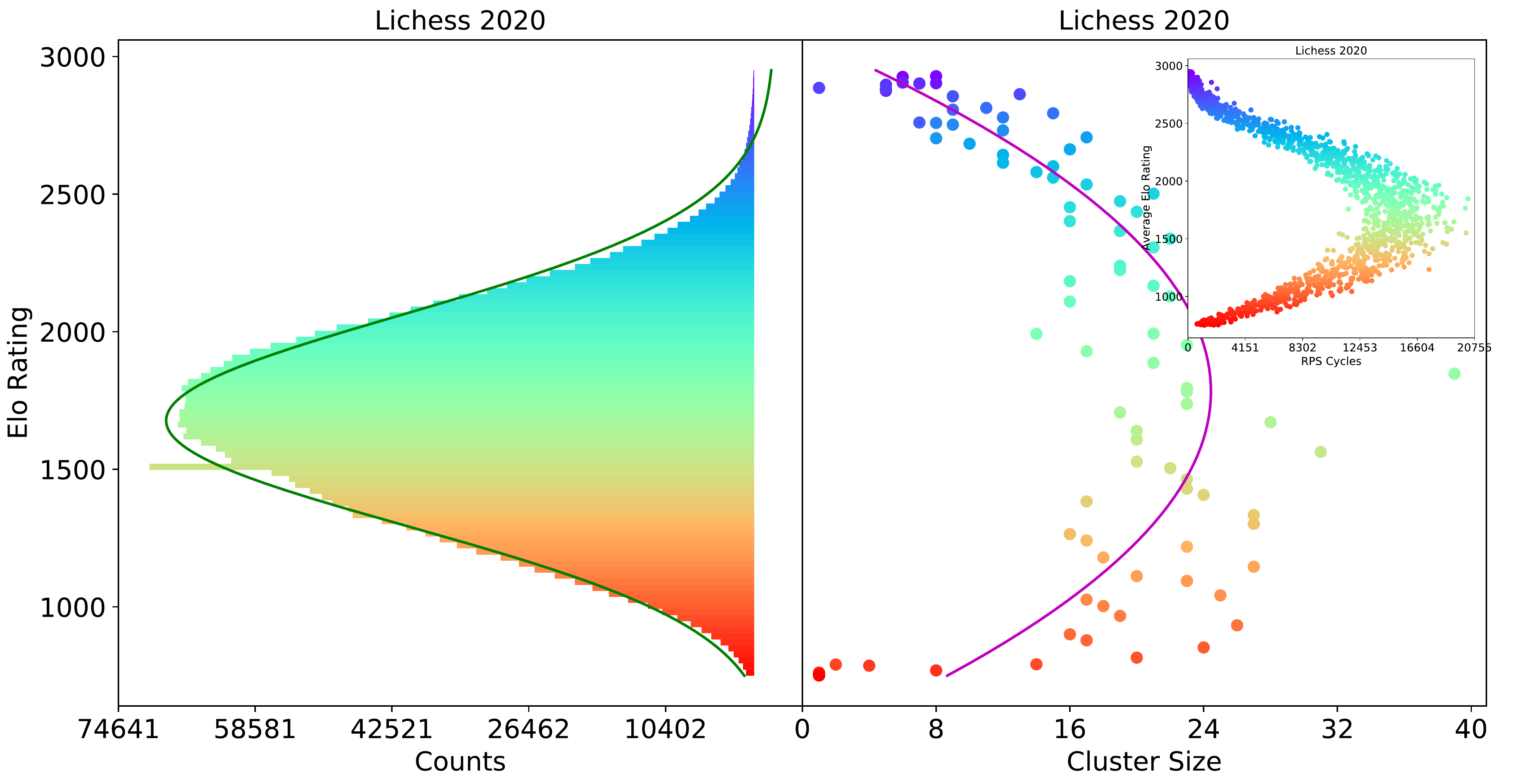}}
  \subfloat[]{
  \includegraphics[width= 0.25\textwidth]{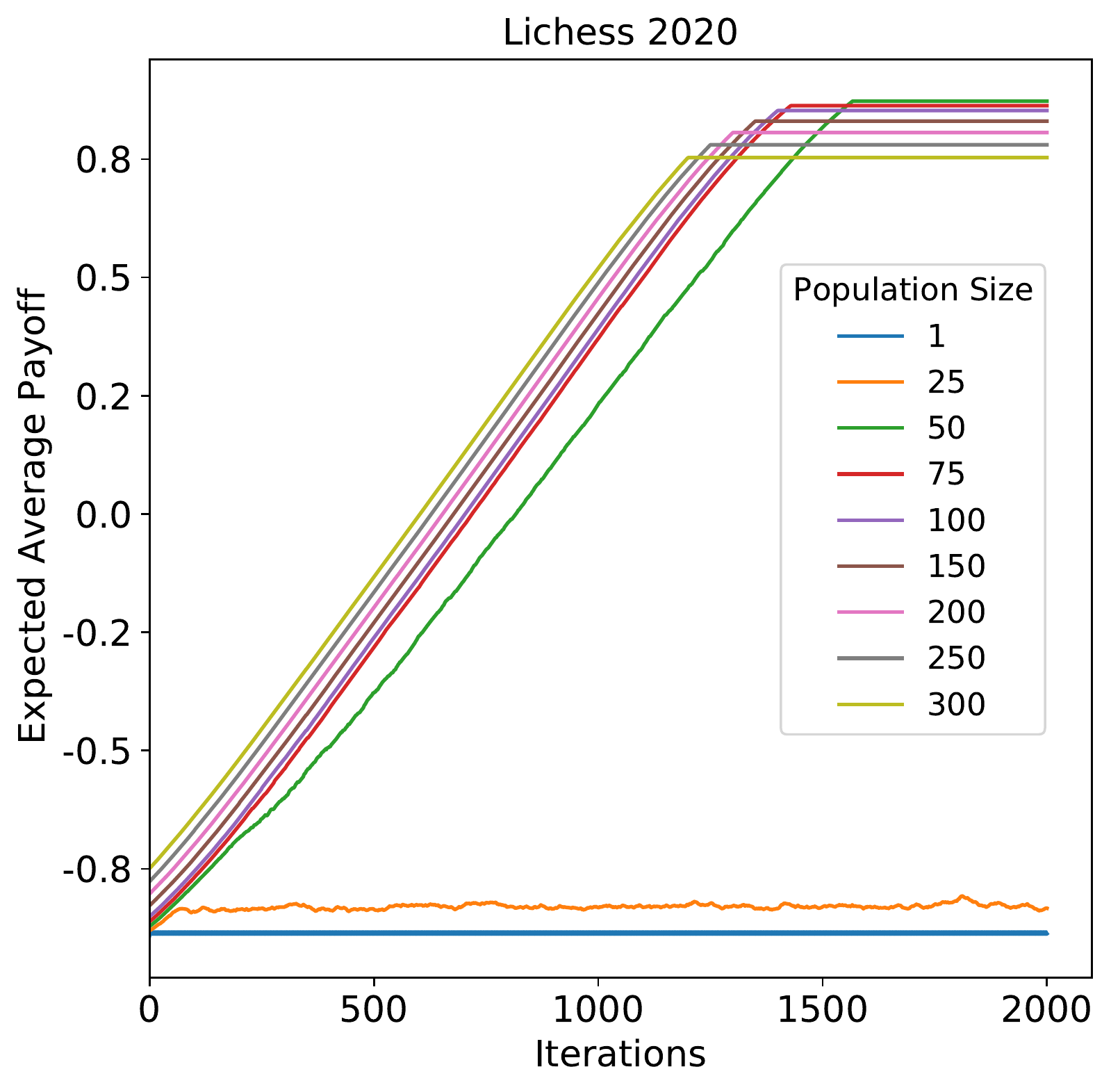}}
  \vspace{-10pt} 
  \caption{We justify the spinning top hypothesis represented by Figure (a) (taken from Figure 1 in \cite{czarnecki2020real}), which states that many real-world games including Chess demonstrate a spinning top  geometry in the strategy space, where the radial axis represents the non-transitivity degree (e.g., A beats B, B beats C, C beats A) and the upright axis represents the transitivity degree (e.g., A beats B, B beats C, A beats C). 
  We study the geometric property of Chess based on one billion game records from human players. Figure (b) shows the comparison between the histogram of Elo ratings (i.e., the transitivity degree) in Lichess 2020  (left) and the degree of non-transitivity at each Elo level (right), where the non-transitivity is measured by both the size of Nash Clusters and the number of Rock-Paper-Scissor cycles (top-right corner). 
  A skewed-normal curve is fitted to illustrate the non-transitivity profile, which verifies the spinning top hypothesis in Figure (a). 
  Specifically, the middle region of the transitive dimension is accompanied by a large degree of non-transitivity, which gradually diminishes as skill evolves towards the high Elo ratings (upward), or diminishes as skill degrades (downward). 
  Notably, the peak of the Elo histogram lies between $1300$ to $1700$, a range where most human players get stuck from making improvements. 
   Furthermore, the peak of the non-transitivity curve coincides with the peak of the Elo histogram; this indicates a strong relationship between the difficulty of playing Chess and  the degree of non-transitivity. Our discovered geometry has important implications for learning. For example, our findings can provide guidance  to improve human players' Elo ratings especially in stages with high degrees of  non-transitivity (e.g., by dedicating more efforts to learn  diverse opening tactics). In Figure (c), we show the performance of population-based training methods with different population sizes.  A phase change in performance  occurs when the population size increases; this justifies  the necessity of maintaining large size and diverse  populations when  training AI agents to solve Chess.} 
  \Description{The results of applying Nash clustering and RPS cycles on Lichess 2020 data, along with using fixed-memory fictitious play to show training behaviour. Finally, a histogram of player Elo ratings in 2020 Lichess games is attached as a comparison.}
  \label{fig:teaser}
  \vspace{15pt}
\end{teaserfigure}

\maketitle

\section{Introduction}

Since the mid 1960s, computer scientists have referred to Chess as the \emph{Drosophila} of AI \cite{ensmenger2012chess}. Similar to the role of fruit fly in genetics studies, Chess provides an accessible, familiar, and relatively simple test bed that nonetheless can be used to produce wide knowledge about other  complex environments. 
As a result, Chess have long been used as benchmarks for the development of AI for decades. The earliest attempt dated back to Shannon's interest in 1950 \cite{shannon1950xxii}.  
In the past decades, remarkable progress has been made on developing AIs that can demonstrate super-human performance on playing Chess; this includes IBM DeepBlue \cite{campbell2002deep} and AlphaZero \cite{silver2017mastering}. 

Despite the algorithmic achievements on Chess, we still have limited knowledge about the geometric landscape of the strategy space of Chess. 
Tied to the strategic geometry are the concepts of \emph{transitivity} and \emph{non-transitivity} \cite{czarnecki2020real}. 
Transitivity refers to how one strategy is  better than other strategies. In transitive games, we know that if strategy A beats strategy B and B beats C, then A can surely beat C. 
On the other hand, 
non-transitive games refers to those where there exists a loop in the preference of strategies. For example, if A can beat B and B beats C, then there is still possibility that  that C can still lose to A in such games.  
One of the simplest purely non-transitive game is Rock-Paper-Scissor game. Yet, 
games in the real world are far more complex; their strategy space often have variations in terms of both transitive and non-transitive components.  
Judging on the strategies that are generated by AIs, researchers have hypothesised that the majority of real-world games possess a spinning top geometry in the strategy space, and in such geometry  (see Figure \ref{fig:teaser}(a)),   the radial axis represents the non-transitivity degree and the upright axis represents the transitivity degree. 
The middle region of the transitive dimension is accompanied by a highly non-transitive dimension, and it gradually diminishes as the skill level either grows towards the Nash Equilibrium \cite{nash1951non},  or degrades to worst-performing  strategies.    

While the idea of non-transitivity is intuitive, how to quantify and measure the size of its dimension remains an open challenge. Early attempt has been made in quantifying non-transitivity \cite{czarnecki2020real}, based on which they proposed the spinning top hypothesis.  However, their sampled strategies are all artificial, in the sense that they only study strategies that are generated by  different algorithmic models of   AIs. While one can argue that both artificial and human strategies are sampled from the same strategy space, they could demonstrate completely different inherent  play styles. For example, \cite{mcilroy2020aligning} showed that Stockfish AI \cite{romstad2017stockfish} did poorly in mimicking the play style of weak human players, despite being depth-limited to mimic that player's strength. Therefore, there is no guarantee that the conclusions from \cite{czarnecki2020real} would apply to human strategies. Consequently, the geometric profile of the strategy space of Chess is still unclear, this directly motivates us to investigate and quantify the degree of non-transitivity on human strategies.

Apart from its popularity in history, 
Chess is also one of the most popular classical games;  it provides the largest amount of well-maintained  human games that are  publicly available in various sources. These databases include Lichess \cite{lichess_about}, which contains games played since 2013, and the Free Internet Chess Server (FICS) which contains games played since 1999 \cite{fics_about}.  By studying over one billion records of human games on Lichess and FICS, in this paper, we measure the non-transitivity of Chess games on real-world data, and investigate its potential implications for training effective AI agents as well as on human skill progression. 
Specifically, we performed two ways of non-transitivity measurements by adopting Nash Clustering and counting the number of Rock-Paper-Scissor cycles.  
Our findings positively indicate that the strategy space occupied by real-world Chess strategies demonstrates a spinning top geometry. More importantly, there exists a strong connection between the degree of non-transitivity and the progression of a Chess player's rating. In particular, high degrees of non-transitivity tend to  prevent human players from making progress on the Elo rating, whereas progressions are easier to make at the level of ratings where the degree of non-transitivity is lower. 
Additionally, we also investigate the implication of the degree of non-transitivity for  population-based training methods. By considering \emph{fixed-memory Fictitious Play} as a proxy, we reach the conclusion that maintaining large-size and diverse populations \cite{yang2021diverse} of strategies is imperative to training effective AI agents in solving Chess type of games, which also matches the empirical findings that are observed based on artificial strategies \cite{czarnecki2020real}.

%
\section{Related Work} \label{section: related works}

Our work is mostly related to \cite{czarnecki2020real}, who proposed the Game of Skill geometry (i.e., the spinning top hypothesis), in which the strategy space of real-world games resembles a spinning top in 2-dimension, where the vertical axis represents the transitive strength, and the horizontal axis represents the degree of non-transitivity. Such hypothesis has  empirically been verified on several real-world games like Hex, Go, Tic-Tac-Toe, and Starcraft. The approach taken by \cite{czarnecki2020real} is to sample strategies uniformly along the transitive strength, and strategies are generated by first running solution algorithms like Minimax Algorithm with Alpha-Beta pruning \cite{russell2002artificial} and Monte-Carlo Tree Search (MCTS) \cite{chaslot2008monte}. Sampling strategies uniformly in terms of transitive strength is then conducted by varying the depths up to which Minimax is performed, and by varying the number of simulations for MCTS. Finally, for Starcraft, sampling strategies are done by using the agents from the training of AlphaStar AI \cite{vinyals2019grandmaster}. However, such sampling of strategies means that the empirical verification is performed  on artificial strategies, and there have been works that suggest an inherent difference in the play styles of AI and humans. For example, \cite{mcilroy2020aligning} showed that Stockfish AI \cite{romstad2017stockfish} did poorly in mimicking the play style of  human players, despite being depth-limited to mimic that player's strength. This motivates us to perform similar investigations, but on human strategies.

This paper focuses the non-transitivity quantification on real-world Chess strategies. However, Chess players are rated primarily using the Elo rating system, which is only valid for the pool of players being considered. As such, there are various systems of player rating computations implemented in the world. The two most used implementations are the ones by the Fédération Internationale des Échecs (FIDE) \cite{fide_charter} and the United States Chess Federation (USCF) \cite{uscf_about}. In its most basic implementation, the Elo rating system consists of two steps, i.e., assigning initial ratings to players and updating the ratings after matches. USCF and FIDE determine a player's starting rating by taking into account multiple factors, including the age of the player and their ratings in other systems \cite{uscf_rating_system, fide_handbook_B022017}. However, the exact rule used in determining the starting rating differs between the two organisations. Furthermore, the update rules used also differs noticeably. These are defined in the handbook of each organisation \cite{uscf_rating_system,fide_handbook_B022017}. On the other hand, online Chess platforms like Lichess and the FICS, which are the source of data for this paper, typically use a refined implementation of the Elo rating system, namely the Glicko \cite{glickman1995glicko} system for FICS, and Glicko-2 \cite{glickman2012example} for Lichess. These systems typically set a fixed starting rating, and then include the computation of a rating deviation for Glicko while updating the player rating after matches. Glicko-2 includes the rating deviation and the rating volatility, both of which represent the reliability of the player's current rating. 

\section{Preliminaries and Notations}

\textbf{Notations. }
For any positive integer $k$, $[k]$ denotes the set $\{1, ..., k\}$. $\Delta_k$ denotes the set of all probability vectors with $k$ elements. For a set $A$ and some function $f$, $\{f(a)\}_{a \in A}$ denotes the set constructed by applying $f$ to each element of $A$. Furthermore, for a set $A= \{A_1, ..., A_{\vert A \vert}\}$, $\{s_i\}_{i \in A}$ denotes $\{s_{A_1}, ..., s_{A_{\vert A \vert}}\}$. Moreover, for an integer $k$, $\{s_i\}_{i = 1}^k$ denotes $\{s_1, ..., s_k\}$. $\mathbb{N}_1$ denotes the set $\{1, 2, ...\}$, i.e., natural numbers starting from $1$. For a set $A = \{A_1, ..., A_{\vert A \vert}\}$, the notation $A_{-j}$ denotes $A$ without the $j^{th}$ element, i.e., $A_{-j} = \{A_i\}_{i \in [\vert A \vert], i \neq j}$. Hence $A = \{A_j, A_{-j}\}$ $\forall j \in [\vert A \vert]$. Furthermore, we use $I[a > b]$ for $a,b \in \mathbb{R}$ to denote the indicator function, where $I[a > b] = 1$ if $a > b$ and $0$ otherwise. Finally, for any integers $m$ and $k$ where $k \leq m$, let $\mathbf{e^m_k}$ denote a vector of length $m$, where the vector is all zeroes, but with a single 1 at the $k^{th}$ element. Likewise, $\mathbf{0}_k$, $\mathbf{1}_k$ denote the vector of zeroes and ones of length $k$ respectively.
\begin{definition}
\label{definition: game theory game}
A game consists of a tuple of $(n, S, M)$ where:
\begin{itemize}
    \item $n \in \mathbb{N}_1$ denotes the number of players.
    \item $S = \prod_{i=1}^n S_i$ denotes the joint strategy space of the game, where $S_i$ is the strategy space of the $i^{th}$ player for $i \in [n]$. A strategy space is a set of strategies a player can adopt.
    \item $M = \{M_i\}_{i = 1}^n$, where $M_i: S \rightarrow \mathbb{R}$ is the payoff function for the $i^{th}$ player. A payoff function maps the outcome of a game resulting from a joint strategy, to a real value, representing the payoff received by that player.
\end{itemize}
\end{definition} 

\textbf{Normal Form (NF) Games. }
In NF games, every player takes a single action simultaneously. Such games are fully defined by a payoff matrix $\boldsymbol{\mathcal{M}}$ where $$\boldsymbol{\mathcal{M}}_{d_1, d_2, ..., d_n} = \big\{M_1(a_1, a_2, ..., a_n), ..., M_n(a_1, a_2, ..., a_n)\big\}$$ 
for $a_k \in A_k$, $d_k \in [\vert A_k \vert]$, $k \in [n]$, $A_k$ denoting the action space of the $k^{th}$ player, and $d_k$ is the index corresponding to $a_k$. $\boldsymbol{\mathcal{M}}$ fully specifies an NF game because it contains all the components of a game by Definition \ref{definition: game theory game}. In an NF game, the strategy adopted by a player refers to the single action taken by that player. Therefore, the action space $A_k$ is equivalent to the strategy space $S_k$ for $k \in [n]$. 

An NF game is zero-sum when $\sum_{z \in \boldsymbol{\mathcal{M}}_{d_1, ..., d_n}} z = 0$ for $d_k \in [\vert A_k \vert]$, $k \in [n]$. For two-player zero-sum NF games, the entries of $\boldsymbol{\mathcal{M}}$ can therefore be represented by a single number, since the payoff of one player is the negative of the other. Additionally, if the game is symmetric, i.e., both players have identical strategy spaces, then the matrix $\boldsymbol{\mathcal{M}}$ is \textbf{skew-symmetric}.

A well-known solution concept that describes the equilibrium of NF games is Nash Equilibrium (NE) \cite{nash1951non,deng2021complexity}. Let $\mathbf{p_i}(s)$ where $s \in S_i$, $\mathbf{p_i} \in \Delta_{\vert S_i \vert}$, $i \in [n]$, denote the probability of player $i$ playing pure strategy $s$ according to probability vector $\mathbf{p_i}$. Also, let $p = \{\mathbf{p_1}, ..., \mathbf{p_n}\}$, and $p[j:\mathbf{k}]$ denoting changing the $j^{th}$ element in $p$ to the probability vector $\mathbf{k}$, where $\mathbf{k} \in \Delta_{\vert S_j \vert}$.

\begin{definition}
\label{NE Constraint}
The set of distributions $p^*$, where $p^* = \{\mathbf{p_1^*}, ..., \mathbf{p_n^*}\}$ and $\mathbf{p_i^*} \in \Delta_{\vert S_i \vert}$ $\forall i \in [n]$, is an NE if $\sum_{s \in S} p^*(s) M_i(s) \geq \sum_{s \in S} p^*[i:\mathbf{e_k^{\vert S_i \vert}}](s) M_i(s)$ $\forall i \in [n]$, $k \in [\vert S_i \vert]$, where $p^*(s) = \prod_{j \in [n]} \mathbf{p_j^*}(s_j)$ and $s = \{s_1, ..., s_n\}$, $s_i \in S_i$ $\forall i \in [n]$.
\end{definition}

Definition \ref{NE Constraint} implies that under NE, no player can increase their expected payoff by unilaterally deviating to a pure strategy. While such NE always exists \cite{nash1951non}, it might not be unique. To guarantee the uniqueness of the obtained NE, we apply the maximum entropy condition when  solving for the NE. The maximum-entropy NE is proved to be unique \cite{balduzzi2018re}.

\begin{theorem}
\label{theorem: symmetric maxent ne}
Any two-player zero-sum symmetric game always has a unique maximum entropy NE given by $\{\mathbf{p^*}, \mathbf{p^*}\}$ where $\mathbf{p^*} \in \Delta_{\vert S_g \vert}$, and $S_g$ is the strategy space for each of the two players \cite{balduzzi2018re}.
\end{theorem}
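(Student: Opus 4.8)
The plan is to reduce the statement to two classical facts: the structure of the Nash-equilibrium set of a two-player zero-sum game, and the strict concavity of Shannon entropy on the simplex. Write $m=\lvert S_g\rvert$ and encode the game by its payoff matrix $A\in\mathbb{R}^{m\times m}$ for the row player; since the game is two-player, zero-sum and symmetric, $A$ is skew-symmetric, $A^\top=-A$. First I would invoke von Neumann's minimax theorem: the game has a value $v=\max_{p\in\Delta_m}\min_{q\in\Delta_m}p^\top A q=\min_{q}\max_{p}p^\top A q$, and both players possess nonempty sets of optimal (maximin) strategies $X^\star,Y^\star\subseteq\Delta_m$. Using the scalar identity $q^\top A p=-(p^\top A q)$ together with the minimax theorem one gets $v=-v$, hence $v=0$.

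Next I would characterize the equilibria. A standard consequence of the minimax theorem (equivalently, LP duality) is that in a two-player zero-sum game the set of Nash equilibria is exactly the product $X^\star\times Y^\star$ of the two players' optimal-strategy sets. With $v=0$, a row strategy $p$ lies in $X^\star$ iff $p^\top A q\ge 0$ for all $q\in\Delta_m$, i.e. iff $p^\top A\ge\mathbf{0}^\top$ componentwise; similarly $q\in Y^\star$ iff $A q\le\mathbf{0}$, which by $A^\top=-A$ is equivalent to $q^\top A\ge\mathbf{0}^\top$. Hence $X^\star=Y^\star=:P^\star=\{p\in\Delta_m:\,p^\top A\ge\mathbf{0}^\top\}$, and the full equilibrium set is $P^\star\times P^\star$. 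The set $P^\star$ is the intersection of the simplex with finitely many closed half-spaces, so it is nonempty (an equilibrium exists), compact and convex.

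Finally I would apply strict concavity. The Shannon entropy $H(p)=-\sum_{i=1}^m p_i\log p_i$ is continuous on $\Delta_m$ and strictly concave, so it attains its maximum over the nonempty compact convex set $P^\star$ at a single point $\mathbf{p^*}$. Taking the maximum-entropy NE to be the equilibrium $(p_1,p_2)\in P^\star\times P^\star$ maximizing $H(p_1)+H(p_2)$ (equivalently the entropy of the product distribution $p_1\otimes p_2$), separability of the objective forces the maximizer to be $(\mathbf{p^*},\mathbf{p^*})$, which is symmetric and unique; this is the claimed equilibrium.

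The main obstacle is the second step: one must carefully justify that the equilibrium set of a zero-sum game factorizes as $X^\star\times Y^\star$, and the skew-symmetry computation showing $X^\star=Y^\star$ — this is exactly where the zero-sum and symmetry hypotheses enter. The uniqueness itself is then routine, being just "a strictly concave function maximized over a convex body." A minor point to pin down is the precise definition of maximum-entropy NE being adopted (sum of the players' marginal entropies versus entropy of the joint distribution), but both yield the same unique symmetric answer because the equilibrium set is a Cartesian product and entropy is additive over independent factors. Alternatively, one can simply specialize the argument of \cite{balduzzi2018re}.
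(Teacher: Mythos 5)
The paper does not actually prove this theorem---it is stated as a quoted result from \cite{balduzzi2018re}---and your argument is correct and is essentially the standard proof underlying that citation: skew-symmetry of the payoff matrix forces the value to zero, the equilibrium set of a zero-sum game factorizes as $X^\star\times Y^\star$ with $X^\star=Y^\star=P^\star$ a nonempty compact convex polytope, and strict concavity of entropy on the simplex yields a unique maximizer, hence a unique and automatically symmetric maximum-entropy NE. I see no gaps; the two points you flag (interchangeability of zero-sum equilibria and the computation showing $X^\star=Y^\star$) are exactly the right ones to justify, and both go through as you describe.
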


As a consequence of Theorem \ref{theorem: symmetric maxent ne}, finding the maximum entropy NE of a two-player zero-sum symmetric game amounts to solving the Linear Programming (LP) problem shown in Equation \ref{Special Case Maxent NE LP}.
\begin{align}
    & \mathbf{p^*} = \argmax_\mathbf{p} \sum_{j \in [\vert S_g \vert]} -\mathbf{p}_j \text{ log } \mathbf{p}_j \nonumber \\
   & \text{s. t. }  \ \ \ \ 
     \boldsymbol{\mathcal{M}} \mathbf{p} \leq \mathbf{0}_{\vert S_g \vert} \nonumber \\
    & \ \ \ \ \ \ \ \ \ \ \ \mathbf{p} \geq \mathbf{0}_{\vert S_g \vert} \nonumber \\
    & \ \ \  \ \ \ \ \ \ \ \  \mathbf{1}_{\vert S_g \vert}^\top \mathbf{p} = 1
\label{Special Case Maxent NE LP}
\end{align} 
where $\boldsymbol{\mathcal{M}}$ is the skew-symmetric payoff matrix of the first player

\textbf{Extensive Form (EF) Games. }
EF games \cite{hart1992games} model situations where participants take actions sequentially and interactions happen for more than one step. We consider the case of perfect information, where every player has knowledge of all the other player's actions, and finite horizon, where interactions between participants end in a finite number of steps (denoted as $K$).
In EF games, an action refers to a single choice of move of some player at some stage of the game. Therefore, the term action space, is used to define the available actions to a player at a certain stage of the game.
\begin{definition}
\label{definition: history EF games}
A history at stage $k$, i.e., $h_k$ is defined as $h_k = \{a_1, ..., a_{k-1} \}$, where $a_j = \{a_j^1, ..., a_j^n \}$ for $j \in [k-1]$. $a_j$ is the action profile at stage $j$, denoting actions taken by all players at that stage. 
\end{definition}
The action space of player $i$ at stage $k$ is therefore a function of the history at that stage, i.e., $A_i(h_k)$ for $k \in [K]$, $i \in [n]$, since the available actions for a player depends on how the game goes.
\begin{definition}
\label{definition: strategy EF games}
Let $H_k$ be the set of all possible histories at stage $k$. Let $A_i(H_k) = \bigcup_{h_k \in H_k} A_i(h_k)$ be the collection of actions available to player $i$ at stage $k$ from all possible history at that stage. Let the mapping $s_i^k : H_k \rightarrow A_i(H_k)$ be a mapping from any history at stage $k$ (i.e., $h_k$) to an action available at that stage due to $h_k$, i.e., $s_i^k(h_k) \in A_i(h_k)$  $\forall h_k \in H_k$. A strategy of a player $i$ is defined as $s_i = \bigcup_{k \in [K]} s_i^k$, and $s_i \in S_i$ where $S_i$ is the strategy space of player $i$.
\end{definition}
In EF games, behavioural strategies are defined as a contingency plan for any possible history at any stage of the game. 
According to the Kuhn's theorem \cite{kuhn2009extensive}, each behavioural strategy has a realisation-equivalent mixed strategy in EF games of perfect recall. 
Finally, the payoff function of an EF game is defined as the mapping of every terminal histories, i.e., history at stage $K + 1$, to a sequence of $n$ real numbers, denoting the payoffs of each player, i.e., $\boldsymbol{\mathcal{M}}_i : H_{K+1} \rightarrow R$ for $i \in [n]$, where $H_{K+1}$ is the set of all terminal histories. However, any terminal history is produced by at least one joint strategy adopted by all players. Therefore, the payoff of a player is also a mapping from the joint strategy space to a real number, consistent with Definition \ref{definition: game theory game}.

An EF game can be converted into a NF game. When viewing an EF game as an NF game, the "actions" of a player in the corresponding NF game would be the strategies of that player in the original EF game. Therefore, a $n$-player EF game would have a corresponding $\boldsymbol{\mathcal{M}}$ matrix of $n$ dimensions, where the $i^{th}$ dimension is of length $\vert S_i \vert$. The entries of this matrix correspond to the sequence of $n$ real numbers representing the payoff of each player as a consequence of playing the corresponding joint strategies. 




\textbf{Elo Rating.} The Elo Rating system is a rating system devised by Arpad Elo \cite{elo1978rating}. It is a measure of a participant's performance relative to others within a pool of players. The main idea behind Elo rating is that given 2 players, A and B, the system is to assign a numerical rating to each of them, i.e., $r_A$ and $r_B$ where $r_A, r_B \in \mathbb{R^+}$, such that the winning  probability of $p(A>B)$ is approximated by 
\begin{equation}
\label{Elo formula}
    p(A > B) \approx \frac{1}{1 + \text{exp}(-k(r_A - r_B))} = \sigma \big(k({r_A - r_B}) \big)
\end{equation}
where $k \in \mathbb{R}^+$, $\sigma(x) = {1}/{1 + \text{exp}(-x)}$. Finally, it is worth noting that a game whose payoff function is defined by Equation \ref{Elo formula} is a purely transitive game. 

\section{Non-Transitivity Quantification}

We employ two methods to quantify the degree of non-transitivity; namely, by counting strategic cycles of length three, and by counting the size of Nash Clustering \cite{czarnecki2020real}. 
Before elaborating  the principle of each one of them, we start by introducing the techniques we applied for payoff matrix construction. 

\subsection{Payoff Matrix Construction} \label{subsection: payoff matrix construction}

The raw data for the experiments were obtained from two sources. The primary source is the Lichess \cite{lichess_about} database, which contains games played from 2013 until 2021. 
The second dataset is sourced from the FICS \cite{fics_about} database. For FICS, we use games from 2019 which contains the most number of games. 
Altogether, we collected over one billion records of match data from human players. 
To proceed with such a large amount of data, we introduce several novel procedures to turn match records into payoff matrices.  

The first processing step would be to extract the required attributes from every game data. These would be the outcome of the game (to create the payoff matrix) and the Elo ratings of both players (as the measurement of transitive strength). 

The second step concerns the number of games taken from the whole Lichess database. As of 2021, Lichess database contains a total of more than 2 billion games hosted from 2013. Therefore, a sampling method is required. Games are sampled uniformly across every month and year, and the number of games sampled every month is 120,000 since in January 2013, the database contains only a little more than 120,000 games. To avoid the influence of having a different number of points from each month, the number of games sampled per month is thus kept constant. In earlier years such as 2013 to 2015, sampling uniformly across a month is trivial since games from an entire month can be load into a single data structure. However, for months in later years, games in any given month could reach as many as 10 million. Therefore, a two-stage sampling method is required. The pseudocode for the algorithm can be found in Algorithm \ref{algo: two-stage uniform sampling} of Appendix \ref{appendix: two-staged sampling}. For FICS, the number of games in every month is relatively small and can be sampled directly. 
 
The third processing step concerns discretisation of the whole strategy space to create a skew-symmetric payoff matrix representing the symmetric NF game. For this purpose, a single game outcome should represent the two participants playing two matches, with both participants playing black and white pieces once. We refer to such match-up as a two-way match-up. A single strategy in the naive sense would thus be a combination of one contingency plan (as defined in Definition \ref{definition: strategy EF games}) if the player plays the white pieces with another one if the same player plays the black pieces. However, such naive interpretation of strategy results in a very large strategy space. We thus discretise the strategy space along the transitive dimension. Since we employ real-world data from human players, we use the Elo rating of the corresponding players to measure transitive strength. Therefore, given a pair of Elo ratings $a$ and $b$, the game where the white player is of rating $a$ and black player of rating $b$, and then another game where the black player is of rating $a$ and the white player of rating $b$, corresponds to a single two-way match-up data. Discretisation is conducted by binning Elo ratings, such that given 2 bins, $b_1$ and $b_2$, any two-way match-up where one player's rating falls into $b_1$ and the other player's rating falls into $b_2$, will be treated as a two-way match-up between $b_1$ and $b_2$. The resulting payoff matrix of the NF game is thus outcomes of all two-way match-ups between every possible pair of bins. Since two-way match-ups are considered, the resulting payoff matrix would be skew-symmetric.

However, when searching for the corresponding two-way match-up between 2 bins of rating in the dataset, there could either be multiple games or no games corresponding to this match-up. Consider a match-up between one bin $a = [a_1, a_2]$ and another bin $b = [b_1, b_2]$. To fill in the corresponding entry of the matrix, a representative score for each case is first calculated. Let $rs_{a,b}$ denote the representative score for when the white player rating is in $a$ and the black player rating is in $b$, and $rs_{b,a}$ to denote the other direction. For argument’s sake, consider finding $rs_{a,b}$. Since games are found for bin $a$ against $b$, scores must be expressed from the perspective of $a$. The convention here is $1$, $0$, and $-1$ for a win, draw, and loss respectively. This convention ensures that the entries of the resulting payoff matrix are skew-symmetric. In the case that there are multiple games where the white player's rating falls in $a$ and black player's rating falls in $b$, the representative score is then taken to be the average score from all these games. On the other hand, if there are no such games, the representative score is then the expected score predicted from the Elo rating, i.e., 
\begin{align}
\label{eqn:Expected representative score}
    \E_s[a > b] = 2p(a > b) - 1
\end{align}
where $\E_s[a>b]$ is the expected score of bin $a$ against bin $b$, and $p(a>b)$ is computed using Equation \ref{Elo formula}, by setting $k$ as $\frac{\text{ln}(10)}{400}$, $r_a = \frac{a_1 + a_2}{2}$, and $r_b = \frac{b_1 + b_2}{2}$. Since this score is predicted using Elo rating, it is indifferent towards whether the player is playing black or white pieces. After computing the representative score for both cases, the corresponding entry in the payoff matrix for the two-way match-up between bin $a$ and $b$ is simply the average of the representative score of both cases, i.e., $\frac{rs_{a,b} + rs_{b,a}}{2}$. 
The payoff matrix construction procedure is summarised in Algorithm \ref{algo: Payoff Matrix Construction}.
\FloatBarrier
\begin{algorithm}[t!]
\SetAlgoLined
\textbf{Inputs}: Dataset of games $D$, set of $m$ tuples $B = \{b_1, ..., b_m\}$; \\
Initialise $\boldsymbol{\mathcal{M}}$ as $m \times m$ matrix of zeroes;\\
\For{$i \in [m]$}{
    \For{$j \in [m]$}{
        \uIf{$i \geq j$}{
            \textbf{continue};\\
            }
        \Else{
            $D_{W,B} = D[W \in b_i, B \in b_j]$;\\
            \uIf{$D_{W,B} = \emptyset$}{
                $r_i, r_j = \frac{b_i^L + b_i^H}{2}, \frac{b_j^L + b_j^H}{2}$;\\
                $E_{W,B} = 2p(i > j) - 1$;\\
            }
            \Else{
                $E_{W,B} =$ Av$(D_{W,B})$;\\
            }
            $D_{B,W} = D[B \in b_i, W \in b_j]$;\\
            \uIf{$D_{B,W} = \emptyset$}{
                $r_i, r_j = \frac{b_i^L + b_i^H}{2}, \frac{b_j^L + b_j^H}{2}$;\\
                $E_{B,W} = 2p(i > j) - 1$;\\
            }
            \Else{
                $E_{B,W} =$ Av$(D_{B,W})$;\\
            }
            $\boldsymbol{\mathcal{M}}_{i,j}, \boldsymbol{\mathcal{M}}_{j,i} = \frac{E_{W,B} + E_{B,W}}{2}, -\frac{E_{W,B} + E_{B,W}}{2}$;\\
        }
    }
}
Output $\boldsymbol{\mathcal{M}}$ as the payoff matrix;\\
\caption{Chess Payoff Matrix Construction}
\label{algo: Payoff Matrix Construction}
\end{algorithm}
\FloatBarrier
In Algorithm \ref{algo: Payoff Matrix Construction}, $D[W \in b_i, B \in b_j]$ collects all games in $D$ where the white player's rating $W$ is $b_i^L \leq W \leq b_i^H$ and the black player's rating $B$ is $b_j^L \leq B \leq b_j^H$. Av$(D_{W,B})$ averages game scores from the perspective of the white player, i.e., the score is $1$, $0$, $-1$ if the white player wins, draws, or losses respectively. Finally, in computing $p(i > j)$ following Equation \ref{Elo formula}, $r_i$ and $r_j$ used in the equation follows from the $r_i$ and $r_j$ computed in the previous line in the algorithm.

\subsection{Nash Clustering}

\begin{figure*}[!t]
\vspace{-5pt}
    \centering
    \subfloat[]{\includegraphics[width = 0.33\textwidth]{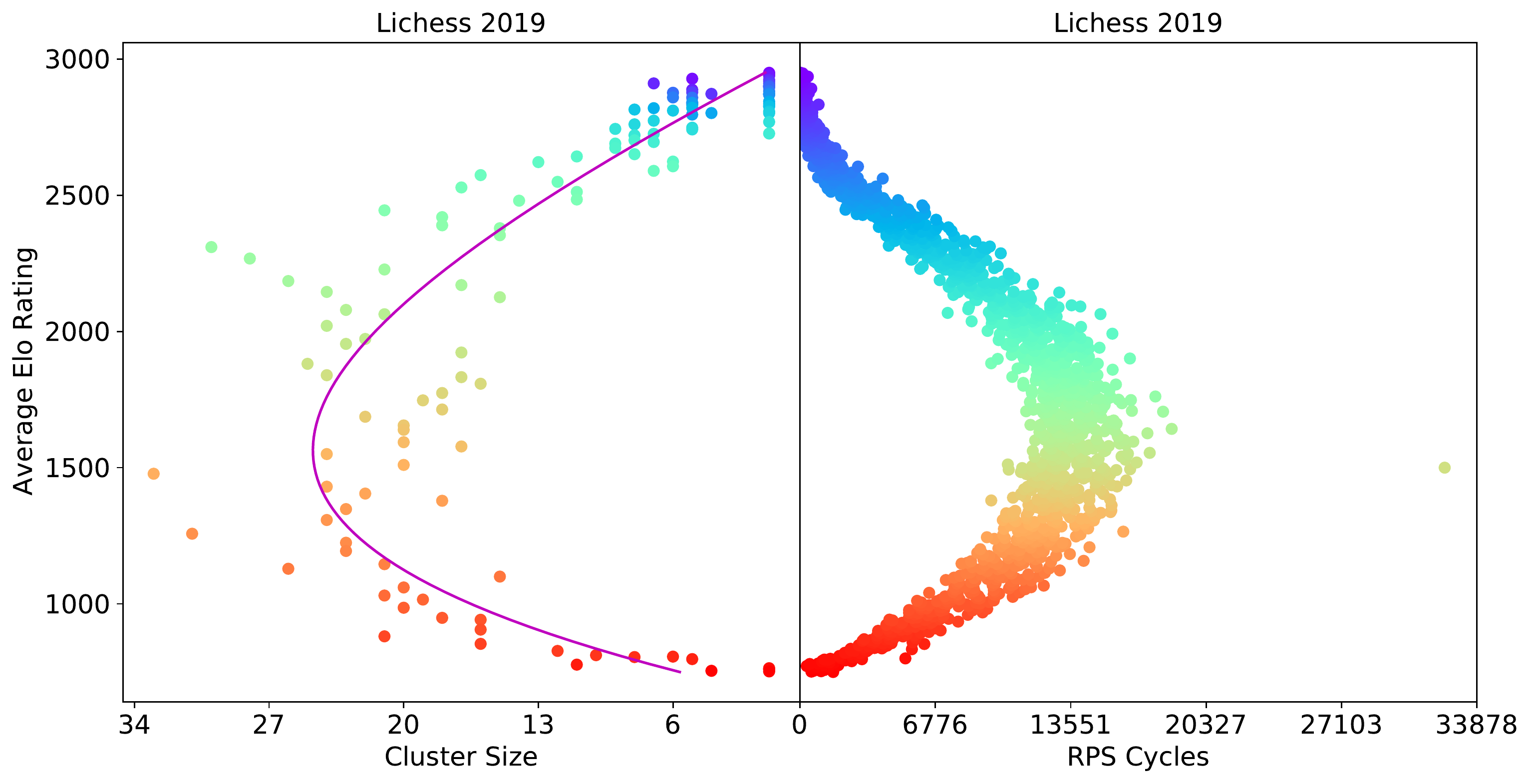}} %
    \subfloat[]{\includegraphics[width = 0.33\textwidth]{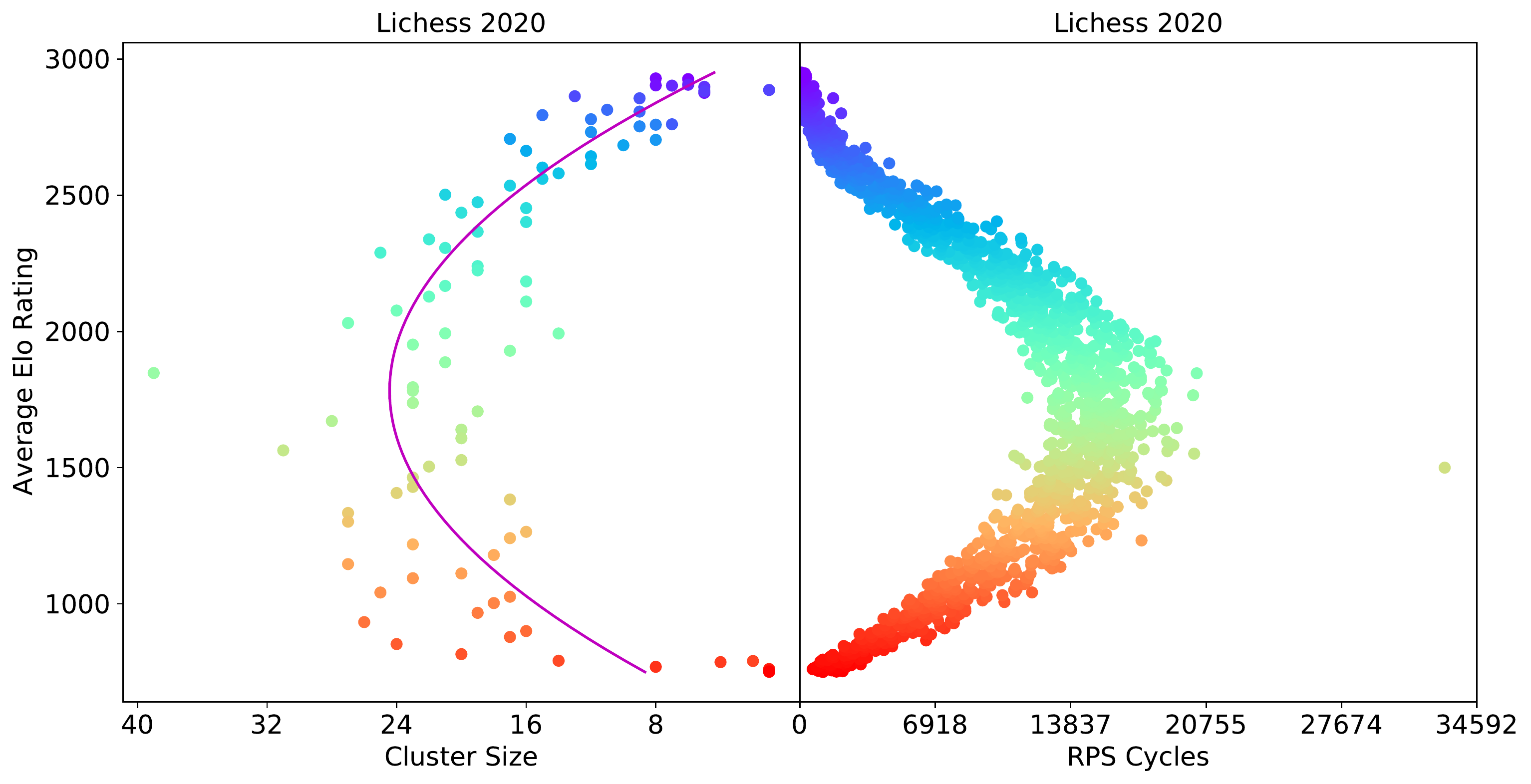}} %
    \subfloat[]{\includegraphics[width = 0.33\textwidth]{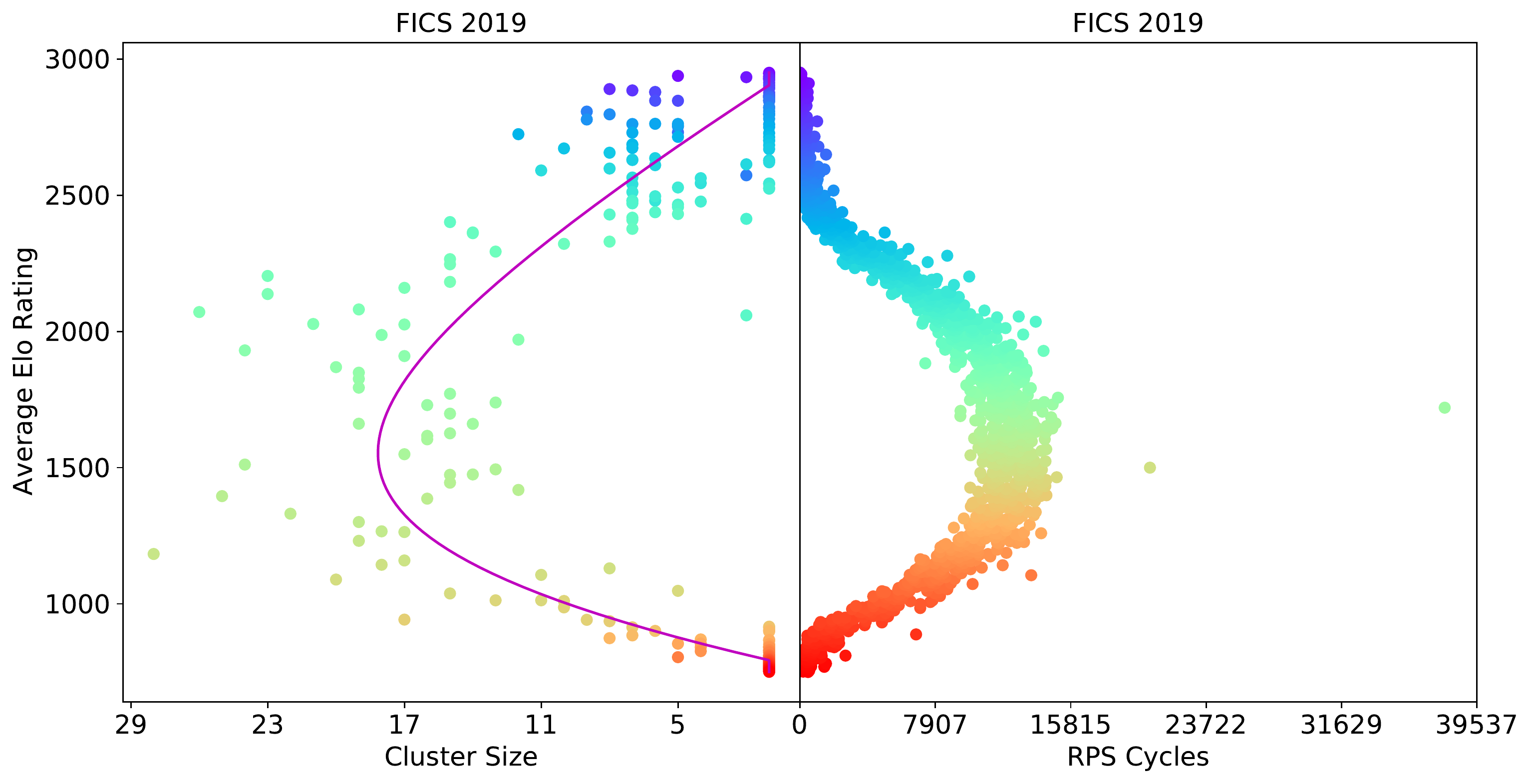}}
    \vspace{-10pt}
    \caption{Nash Clustering (left of each plot) and RPS Cycles (right of each plot) on Lichess 2019 (a), Lichess 2020 (b) and FICS 2019 (c). See Figure \ref{fig: full results} of Appendix \ref{appendix: full results} for the full version of the results, which include Lichess 2013 - 2020 and FICS 2019. The spinning top geometry is observed in both Nash Clustering and RPS Cycles plots, for all the years. Furthermore, both Nash cluster size and number of RPS cycles peaks at a similar Elo rating range of 1300 to 1700 for all the years. This provides comprehensive evidence that the strategies adopted in real-world games do not vary throughout the years or from one online platform to another, and that the non-transitivity of real-world strategies is highest in the range of 1300 to 1700 Elo rating.}
    \label{fig: nash clustering and rps cycles elo}
    \Description{Nash Clustering and counting RPS Cycles on Lichess data from 2019 and 2020, and FICS data from 2019; empirical evidence to illustrate the existence of the spinning top geometry in real-world Chess strategy space.}
    \vspace{-0pt}
\end{figure*}


\begin{figure}[!t]
    \centering
    \includegraphics[width = \linewidth]{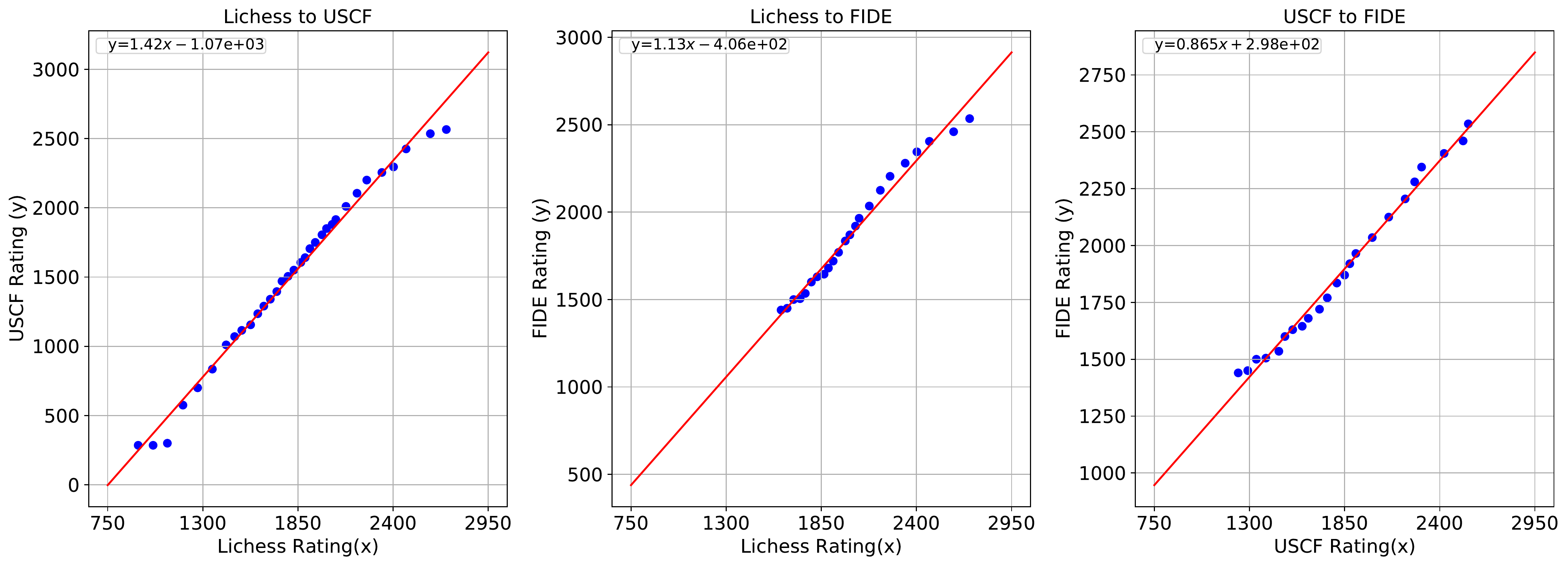}
    \vspace{-20pt}
    \caption{Chess Rating Mappings between Lichess, USCF, and FIDE. The relation between Lichess, USCF, and FIDE ratings are largely linear and monotonically increasing, and the linear model fits well to the data.}
    \Description{Chess player rating mapping between Lichess system, FIDE system, and USCF system.}
    \label{fig: lr regression}
    \vspace{-0pt}
\end{figure}

Following the payoff matrix constructed from the previous step, the first method of quantifying non-transitivity is through Nash Clustering \cite{czarnecki2020real}. Nash Clustering is derived from the layered game geometry, which states that the strategy space of a game can be clustered into an ordered list of layers such that any strategy in preceding layers is strictly better than any strategy in the succeeding layers. A game in which the strategy space can be clustered in this way is known as a layered finite game of skill.

\begin{definition}
\label{definition: k-layered}
A two-player zero-sum symmetric game is defined as a $k$-layered finite game of skill if the strategy space of each player $S_g$ can be factored into $k$ layers, i.e., $L_i$ $\forall i \in [k]$ where $\bigcup_{i \in [k]} L_i = S_g$, $L_i \cap L_j = \emptyset \ \forall i \neq j$, $i,j \in [k]$, and the layers are such that $M_1(s_1, s_2) > 0$ $\forall s_1 \in L_i$, $s_2 \in L_j$, $\forall i,j \in [k]$, $i < j$ and $\exists z \in \mathbb{N}_1$ such that $\forall i < z$, $\vert L_i \vert \leq \vert L_{i+1} \vert$, $\forall i \geq z$, $\vert L_i \vert \geq \vert L_{i+1} \vert$.
\end{definition}

A consequence of Definition \ref{definition: k-layered} is that strategies in earlier layers would have higher win-rate than those in succeeding layers. For any strategy $s \in S_g$, the corresponding win-rate $TS(s)$ is
\begin{equation}
\label{strategy win-rate equation}
    TS(s) = \frac{1}{\vert S_g \vert - 1}\sum_{z \in S_g} I[M_1(s,z) > 0]. 
\end{equation} 
Therefore, if one define the transitive strength of a layer to be the average win-rate of the strategies within that layer, the transitive strength of the layers would be ordered in descending order. Hence, intuitively, one can see the $k$-layered finite game of skill geometry as separating the transitivity and non-transitivity components of the strategy space. The non-transitivity is contained within each layer, whereas the transitivity variation takes place across layers. It is thus intuitive to use the layer sizes as a measure of non-transitivity.

Furthermore, from Definition \ref{definition: k-layered}, the layer sizes are such that it increases monotonically up to a certain layer, from which it then decreases monotonically. Thus, when the transitive strength of each layer is plotted against the size, a 2-dimensional spinning top structure would be observed if the change in sizes are strictly monotonic and $k > 2$. It is also because of this reason, \cite{czarnecki2020real} named this geometry the spinning top geometry.

For every finite game, while there always exists $k \geq 1$ for which the game is a $k$-layered finite game of skill, a value of $k > 1$ does not always exist for any game. Furthermore, the layered geometry is not useful if $k = 1$. Therefore, an alternative relaxation to the $k$-layered geometry is proposed where a layer is required to be "overall better" than the succeeding layers. However, it is not necessary that every strategy in a given layer can beat every strategy in any succeeding layers. Such layer structure is obtained by Nash Clustering.

Let $\text{Nash}(\boldsymbol{\mathcal{M}} \vert X)$ where $X \subset S_g$ denote the symmetric NE of a two-player zero-sum symmetric game, when both players are restricted to only the strategies in $X$, and $\boldsymbol{\mathcal{M}}$ be the payoff matrix from the perspective of player 1. Furthermore, let $\text{supp}(\text{Nash}(\boldsymbol{\mathcal{M}} \vert X))$ denote the set of pure strategies that are in support of the symmetric NE. Finally, let the set notation $A \backslash B$ denote the set $A$ but excluding the elements that are in set $B$.

\begin{definition}
\label{Nash Clustering Definition}
Nash Clustering of a finite two-player zero-sum symmetric game produces a set of clusters $C$ defined as 
\begin{align}
C &= \{C_j : j \in \mathbb{N}_1, C_j \neq \emptyset\},  \ \  \text{where}  \nonumber\\ C_0 &= \emptyset, \ \  C_i = \text{supp}\Big(\text{Nash}\big(\boldsymbol{\mathcal{M}} \big| S_g \backslash \bigcup_{k = 0}^{i-1} C_k \big)\Big), \ \ \  \forall i \geq 1.
\end{align}
\end{definition} 
Furthermore, to ensure the Nash Clustering procedure is unique, this paper uses maximum entropy NE  in Equation \ref{Special Case Maxent NE LP}. Following Nash Clustering, the measure of non-transitivity would thus be the size or number of strategies in each cluster.  

One can  apply the Relative Population Performance (RPP) \cite{balduzzi2019open} to determine if a layer or cluster is overall better than another cluster. Given 2 Nash clusters $C_i$ and $C_j$, $C_i$ is overall better or wins against $C_j$ if $\text{RPP}(C_i, C_j) > 0$. When using RPP as a cluster relative performance measure, the requirement that any given cluster is overall better than any succeeding clusters proves to hold \cite{czarnecki2020real}. Specifically, one can use the fraction of clusters beaten, or win-rate, defined in Equation \ref{RPP win-rate equation}, as the measure of transitive strength of each cluster.
\begin{equation}
\label{RPP win-rate equation}
TS(C_a) = \frac{1}{\vert C \vert - 1} \sum_{C_i \in C} I \Big[\text{RPP}\big(C_a, C_i\big) > 0\Big]
\end{equation}
where  $I$ is the indicator function. We refer to the above win-rate as the RPP win-rate. Since any layer  beats all succeeding layers, the win-rates of the Nash clusters would occur in descending order. 

Alternatively, we can apply the average Elo rating in Equation \ref{Elo formula} of the pure strategies inside each Nash cluster to measure transitive strength of the cluster. 
Specifically, given the payoff matrix from  Section \ref{subsection: payoff matrix construction},  each pure strategy corresponds to an Elo rating bin, and the transitive strength of a Nash cluster is thus computed as the average Elo rating of the bins inside that cluster.  
Let $C_k \in C$ be a Nash cluster, and $c_k$ be the integer indexes of the pure strategies in support of $C_k$, $B = \{b_1, ..., b_m\}$ be the bins used when constructing the payoff matrix, where each bin $b_i$ is has a lower and upper bound of $(b_i^L, b_i^H)$, and the bins corresponding to $C_k$ is $\{b_j\}_{j \in c_k}$. The average Elo rating of Nash cluster $C_k$ is defined as 
\begin{equation}
\label{eqn: Nash cluster average label}
    TS_{\text{Elo}}(C_k) = \frac{1}{\vert C_k \vert} \sum_{j \in c_k} \frac{b_j^L + b_j^H}{2}. 
\end{equation}

Given the average Elo rating against the cluster size for every Nash cluster from Nash Clustering, the final step would be to fit a curve to represent the spinning top geometry. We use the method of fitting an affine-transformed skewed normal distribution to minimise Mean Squared Error (MSE). The results of applying Nash Clustering is shown in Figure \ref{fig: nash clustering and rps cycles elo}.

Furthermore, we also investigate using win-rates as a measure of transitive strength, defined in Equation \ref{RPP win-rate equation}. However, instead of using RPP to define relative performance between clusters, we use an alternative, more efficient measure, that achieves the same result when applied on Nash clusters. This measure is named the Nash Population Performance (NPP).
\begin{definition}
\label{NPP definition}
Let $C = \{C_1, ..., C_{\vert C \vert}\}$ be the Nash Clustering of a two-player zero-sum symmetric game with a payoff matrix of $\boldsymbol{\mathcal{M}}$. $\text{NPP}(C_i, C_j) = \mathbf{p_i}^\top \boldsymbol{\mathcal{M}} \mathbf{p_j}$ where $\mathbf{p_k}= \text{Nash}(\boldsymbol{\mathcal{M}} \vert \bigcup_{r = k}^{\vert C \vert} C_r$ for $k \in [\vert C \vert]$ and $C_i, C_j \in C$.
\end{definition}
Using NPP also satisfies the requirement that any cluster is overall better than the succeeding clusters.
\begin{theorem}
\label{NPP arrangement theorem}
Let $C$ be the Nash Clustering of a two-player zero-sum symmetric game. Then $\text{NPP}(C_i, C_j) \geq 0$ $\forall i \leq j$, $C_i, C_j \in C$ and $\text{NPP}(C_i, C_j) \leq 0$ $\forall i > j$, $C_i, C_j \in C$.
\end{theorem}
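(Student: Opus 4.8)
The plan is to reduce the whole statement to one elementary property of the restricted‑game equilibria together with the skew‑symmetry of $\boldsymbol{\mathcal{M}}$. First I would record that because $\boldsymbol{\mathcal{M}}$ is skew‑symmetric, for any probability vectors $\mathbf{u},\mathbf{v}$ we have $\mathbf{u}^\top\boldsymbol{\mathcal{M}}\mathbf{v}=(\mathbf{u}^\top\boldsymbol{\mathcal{M}}\mathbf{v})^\top=\mathbf{v}^\top\boldsymbol{\mathcal{M}}^\top\mathbf{u}=-\mathbf{v}^\top\boldsymbol{\mathcal{M}}\mathbf{u}$, so $\text{NPP}(C_i,C_j)=-\text{NPP}(C_j,C_i)$. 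In particular $\text{NPP}(C_i,C_i)=0$, and it suffices to prove the single inequality $\text{NPP}(C_i,C_j)\ge 0$ for $i<j$; the claim for $i>j$ is then immediate and the diagonal case is already settled.

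The core step is to exploit the defining optimality of $\mathbf{p_i}$. By the Nash Clustering construction, $\mathbf{p_i}=\text{Nash}(\boldsymbol{\mathcal{M}}\mid X_i)$ with $X_i:=S_g\setminus\bigcup_{k=0}^{i-1}C_k=\bigcup_{r=i}^{\vert C\vert}C_r$, the two descriptions agreeing because the clusters partition $S_g$. Viewing $\mathbf{p_i}$ as a vector on all of $S_g$ supported on $C_i\subseteq X_i$, the maximum‑entropy NE characterization of Equation \ref{Special Case Maxent NE LP} applied to the subgame on $X_i$ (its value being $0$ by Theorem \ref{theorem: symmetric maxent ne} and skew‑symmetry) gives $(\boldsymbol{\mathcal{M}}\mathbf{p_i})_s\le 0$ for every pure strategy $s\in X_i$. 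Since $i<j\le\vert C\vert$ we have $C_j\subseteq X_i$, hence $(\boldsymbol{\mathcal{M}}\mathbf{p_i})_s\le 0$ for every $s\in\text{supp}(\mathbf{p_j})=C_j$. Therefore
$$\text{NPP}(C_j,C_i)=\mathbf{p_j}^\top\boldsymbol{\mathcal{M}}\mathbf{p_i}=\sum_{s\in C_j}\mathbf{p_j}(s)\,(\boldsymbol{\mathcal{M}}\mathbf{p_i})_s\le 0,$$
each summand being a nonnegative probability times a nonpositive number. By the skew‑symmetry reduction, $\text{NPP}(C_i,C_j)=-\text{NPP}(C_j,C_i)\ge 0$, which finishes the case $i<j$ and hence the theorem.

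The only genuinely delicate point is the index bookkeeping: one must verify that the strategy set $S_g\setminus\bigcup_{k=0}^{i-1}C_k$ over which $\mathbf{p_i}$ is an equilibrium really contains all of $C_j$ for every $j>i$ (this is exactly where the nested partition structure of Nash Clustering is used), and that the equilibrium LP may legitimately be read off on this subgame, so that the deviation inequalities $(\boldsymbol{\mathcal{M}}\mathbf{p_i})_s\le 0$ are available for precisely the indices $s$ we need. There is no hard analytic obstacle — once the supports and restricted games are aligned, skew‑symmetry does the rest.
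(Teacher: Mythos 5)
Your proposal is correct and follows essentially the same route as the paper's proof: both arguments rest on the fact that $\mathbf{p_i}$ is a Nash equilibrium of the subgame on $\bigcup_{r\ge i}C_r$, which contains $C_j$ for $j>i$, combined with skew-symmetry giving the subgame value $0$. The only cosmetic difference is that you derive one inequality from the LP constraint $\boldsymbol{\mathcal{M}}\mathbf{p}\le\mathbf{0}$ and obtain the other via the antisymmetry $\text{NPP}(C_i,C_j)=-\text{NPP}(C_j,C_i)$, whereas the paper invokes both one-sided deviation inequalities of the equilibrium directly.
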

\begin{figure*}[!t]
\vspace{-5pt}
    \centering
    \subfloat[]{\includegraphics[width = 0.33\textwidth]{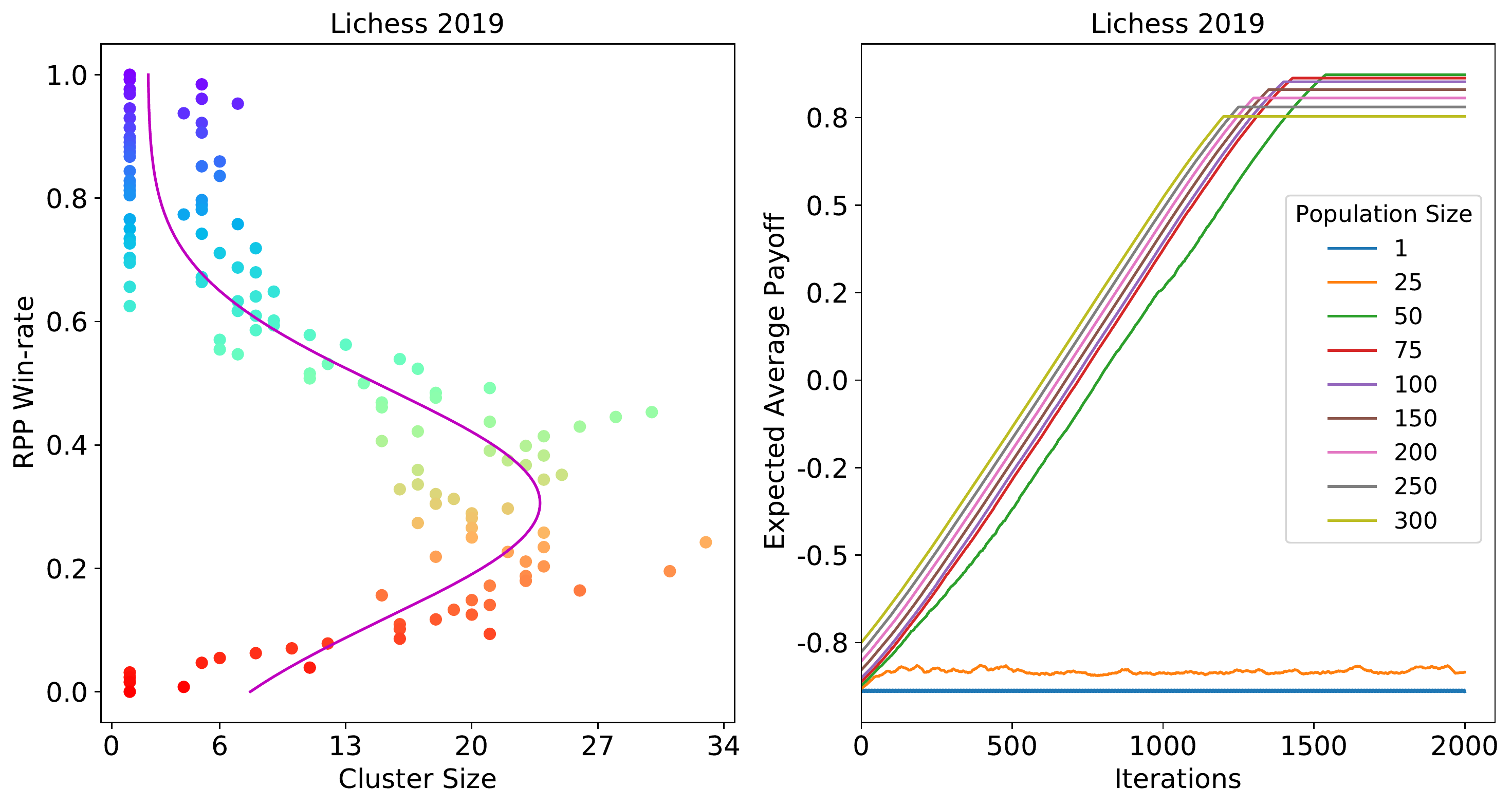}} %
    \subfloat[]{\includegraphics[width = 0.33\textwidth]{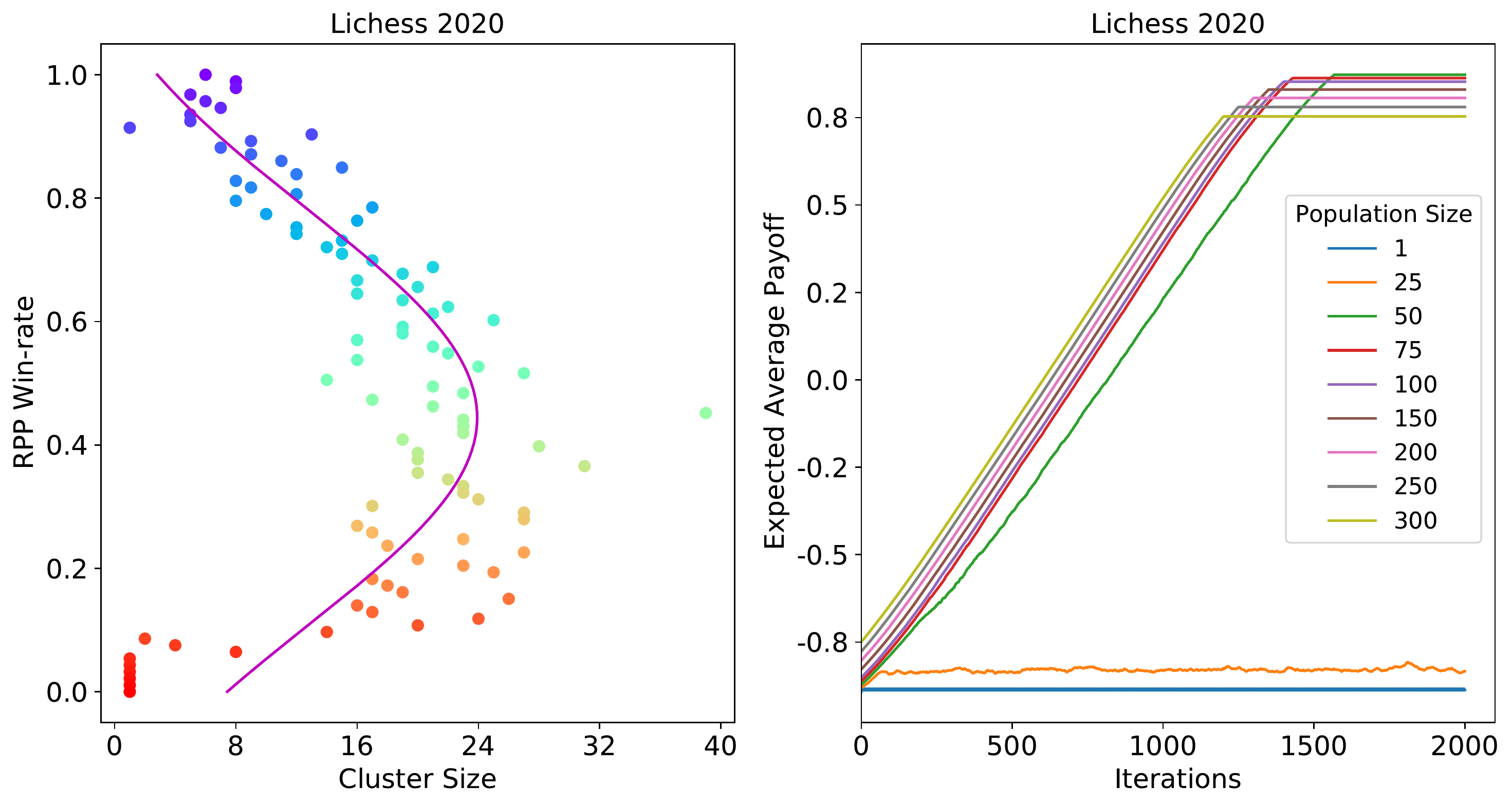}} %
    \subfloat[]{\includegraphics[width = 0.33\textwidth]{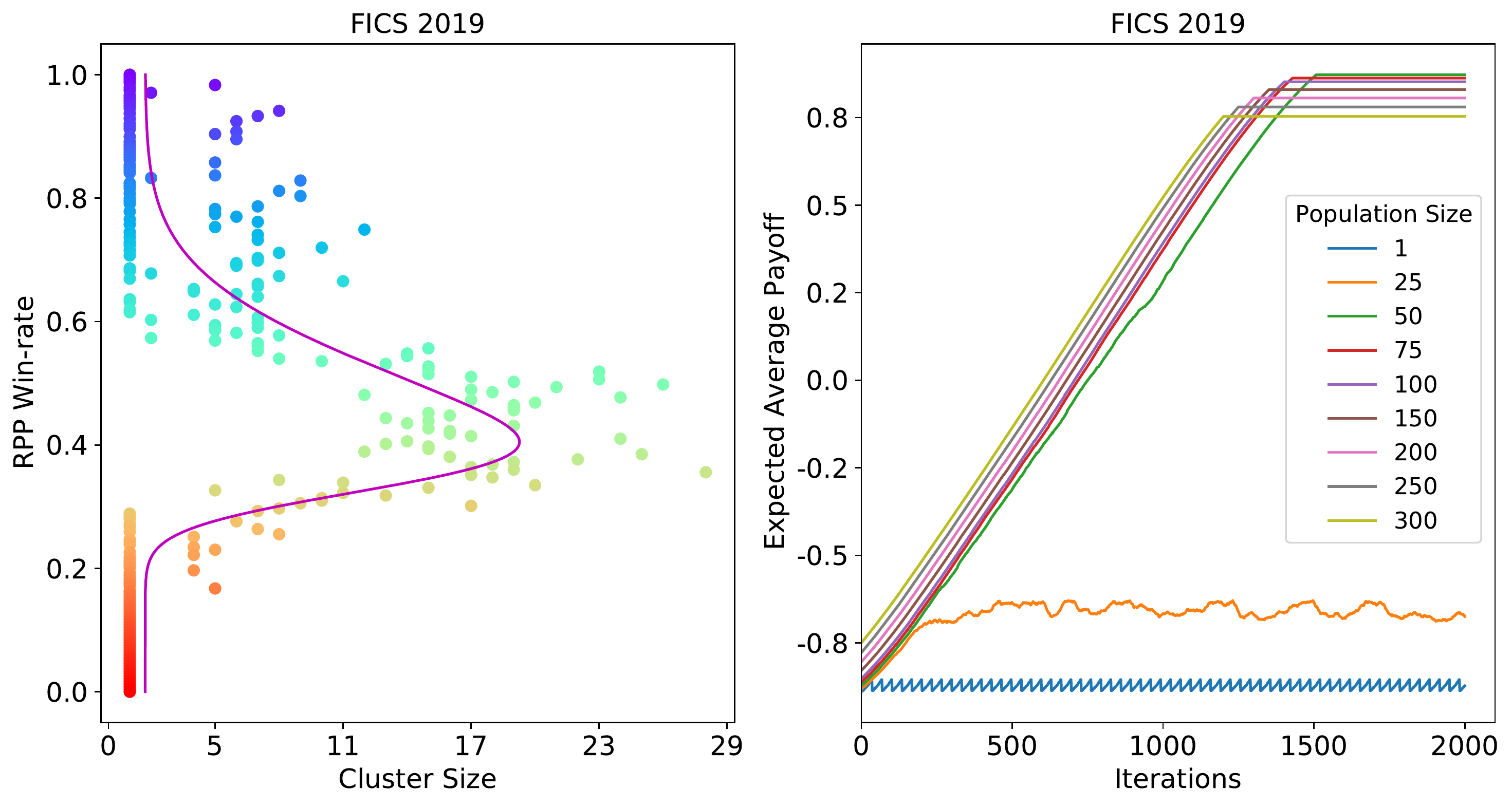}}
    \vspace{-10pt}
    \caption{(Left of each plot) Nash Clustering using RPP win-rate as the measure of transitive strength on Lichess 2019 (a), 2020 (b) and FICS 2019 (c). The spinning top geometry persists despite using a different measure of transitive strength compared to Figure \ref{fig: nash clustering and rps cycles elo}. (Right of each plot) Fixed-Memory Fictitious Play on Lichess 2019 (a), Lichess 2020 (b) and FICS 2019 (c). The training process does converge for all the above cases, provided that the starting population size is 50 or above. This is because the Nash cluster sizes are never much larger than 50, and therefore a population size of 50 ensures good coverage of the Nash clusters around any transitive strength. Such convergence behaviour also provides additional evidence that the spinning top geometry is present in the strategy space of real-world Chess. See Figure \ref{fig: appendix 2} of Appendix \ref{appendix: alternative measure results} and Figure \ref{fig: full results} of Appendix \ref{appendix: full results} for full versions of the results on Nash Clustering with RPP win-rate and fixed-memory Fictitious Play respectively.}
    \label{fig: nash clustering and training}
    \Description{Training process using Fixed-Memory Fictitious Play on Lichess games from 2019, 2020 and FICS games from 2019, compared with their corresponding non-transitivity plots using Nash Clustering.}
    \vspace{-5pt}
\end{figure*}

The proof of Theorem \ref{NPP arrangement theorem} is given in Appendix \ref{appendix: theorem proofs}. This theorem suggests that applying NPP to compute Equation \ref{RPP win-rate equation} instead of RPP would result in an identical win-rate for any Nash cluster, and the descending win-rate guarantee is preserved. NPP is more efficient since in Definition \ref{NPP definition}, $\mathbf{p_k}$ for any Nash cluster is the corresponding NE solved during Nash Clustering to obtain that cluster. However, this also means that NPP is more suitable as a relative performance measure of Nash clusters, and less so for general population of strategies. For the latter, RPP would be more suitable.

Figure \ref{fig: nash clustering and training} shows the results of applying Nash Clustering with win-rate as the measure of transitive strength. Again, the spinning top geometry persists for all cases, suggesting that Nash Clustering is agnostic to the measure of transitive strength.

\subsection{Rock-Paper-Scissor Cycles}


%

The second method to quantify non-transitivity is to measure the length of the longest cycle in the strategy space, since this stems directly from the definition of non-transitivity. A cycle is defined as the sequence of strategies that starts and ends with the same strategy, where any strategy beats the strategy next to it in the sequence. However, calculating the length of the longest cycle in a directed graph is well-known to be  NP-hard \cite{bjorklund2004approximating}. Therefore, we use the number of cycles of length three (i.e., Rock-Paper-Scissor or RPS cycles), passing through every pure strategy as a proxy to quantify non-transitivity. 

To obtain the number of RPS cycles passing through each pure strategy, an adjacency matrix is first formed from the payoff matrix. Letting the payoff matrix from Section \ref{subsection: payoff matrix construction} be $\boldsymbol{\mathcal{M}}$, the adjacency matrix $\boldsymbol{\mathcal{A}}$ can be written as $\big\{\boldsymbol{\mathcal{A}}_{i,j}\big\}_{i,j = 1}^{\vert S_g \vert}$ where $\boldsymbol{\mathcal{A}}_{i,j} = 1$ if $\boldsymbol{\mathcal{M}}_{i,j} > 0$, and $0$ otherwise.

\begin{theorem}
\label{closed walks theorem}
For an adjacency matrix $\boldsymbol{\mathcal{A}}$ where $\boldsymbol{\mathcal{A}}_{i,j} = 1$ if there exists a directed path from node $n_i$ to node $n_j$, the number of paths of length $k$ that starts from node $n_i$ and ends on node $n_j$ is given by $(\boldsymbol{\mathcal{A}}^k)_{i,j}$, where the length is defined as the number of edges.
\end{theorem}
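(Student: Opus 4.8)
The plan is to prove this by induction on the walk length $k$, turning the combinatorial count of walks into the algebraic sum that appears in the definition of matrix multiplication. Throughout I read $\boldsymbol{\mathcal{A}}_{i,j}=1$ as ``there is a directed edge from $n_i$ to $n_j$'' (a length-one walk), which is how the matrix is built here, $\boldsymbol{\mathcal{A}}_{i,j}=I[\boldsymbol{\mathcal{M}}_{i,j}>0]$, and I use ``path of length $k$'' in the sense intended by the theorem: a sequence of $k$ consecutive directed edges $n_i \to n_{t_1} \to \cdots \to n_{t_{k-1}} \to n_j$, with repetitions of nodes allowed (i.e.\ a walk).

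For the base case $k=1$, by definition $(\boldsymbol{\mathcal{A}}^1)_{i,j} = \boldsymbol{\mathcal{A}}_{i,j}$, which equals $1$ exactly when there is a single directed edge from $n_i$ to $n_j$ and $0$ otherwise; this is precisely the number of length-one walks from $n_i$ to $n_j$. For the inductive step, assume $(\boldsymbol{\mathcal{A}}^k)_{i,\ell}$ equals the number of length-$k$ walks from $n_i$ to $n_\ell$ for all pairs $(i,\ell)$. Every length-$(k{+}1)$ walk from $n_i$ to $n_j$ decomposes in exactly one way as a length-$k$ walk from $n_i$ to some intermediate node $n_\ell$ followed by the edge $n_\ell \to n_j$, and conversely every such concatenation is a valid length-$(k{+}1)$ walk; so the count of length-$(k{+}1)$ walks from $n_i$ to $n_j$ is $\sum_{\ell} (\boldsymbol{\mathcal{A}}^k)_{i,\ell}\,\boldsymbol{\mathcal{A}}_{\ell,j} = (\boldsymbol{\mathcal{A}}^{k}\boldsymbol{\mathcal{A}})_{i,j} = (\boldsymbol{\mathcal{A}}^{k+1})_{i,j}$, which closes the induction.

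The only point that needs genuine care — the closest thing here to an obstacle — is verifying that ``length-$k$ walk, then one more edge'' is an honest bijection onto the set of length-$(k{+}1)$ walks: nothing is double-counted (the intermediate node $n_\ell$ is uniquely the $k$-th vertex of the walk) and nothing is missed. Once this bijection is spelled out, replacing the count of these objects by the matrix-multiplication sum is immediate, and the theorem follows. I would additionally note that in our application only $k=3$ is used, and on the diagonal: $(\boldsymbol{\mathcal{A}}^3)_{i,i}$ counts the closed length-three walks through pure strategy $i$, i.e.\ the Rock--Paper--Scissor cycles passing through it, so no sharpening from walks to simple paths is required.
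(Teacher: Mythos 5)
Your proof is correct and follows essentially the same route as the paper's: induction on $k$ with base case $k=1$ given by the definition of the adjacency matrix, and the inductive step realized by expanding $(\boldsymbol{\mathcal{A}}^{k}\boldsymbol{\mathcal{A}})_{i,j}=\sum_{\ell}(\boldsymbol{\mathcal{A}}^k)_{i,\ell}\boldsymbol{\mathcal{A}}_{\ell,j}$ as the count of length-$(k{+}1)$ walks decomposed at their last edge. Your explicit remark that this decomposition is a bijection (so nothing is double-counted) is a slightly more careful rendering of the same argument, and your closing observation about walks versus simple paths for $k=3$ matches the caveat the paper makes in the main text.
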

The proof of Theorem \ref{closed walks theorem} can be found in Appendix \ref{appendix: theorem proofs}. By Theorem \ref{closed walks theorem}, the number of RPS cycles passing through a strategy can therefore be found by taking the diagonal of $\boldsymbol{\mathcal{A}}^3$. Note that we \textbf{cannot} go beyond three because the diagonal element of $\boldsymbol{\mathcal{A}}^n$ includes cycles with repeated nodes for $n > 3$.  
This completes the quantification of non-transitivity via counting RPS Cycles. The results are shown in Figure \ref{fig: nash clustering and rps cycles elo}. 
Finally, we also investigate the use of RPS cycles but using strategy win-rate defined in Equation \ref{strategy win-rate equation}, as a measure of transitive strength instead. 
Figure \ref{fig: appendix 2} of Appendix \ref{appendix: alternative measure results} shows the results of this method. Again, the spinning top geometry persists, showing that the RPS cycles method is agnostic towards the measure of transitive strength used.


\subsection{Connections to Existing Rating Systems}

As explained in Section \ref{section: related works} and Equation \ref{Elo formula}, there are various Chess player rating systems throughout the world, and each system is only valid for the pool of players over which the system is defined. Since the results in Figure \ref{fig: nash clustering and rps cycles elo} use game data from Lichess and FICS, they are not directly applicable to different rating systems such as USCF and FIDE. An investigation into the mapping between these rating systems is thus required.

We conduct a short experiment to obtain the mapping between the rating system of Lichess, USCF and FIDE. The dataset for this experiment is obtained from ChessGoals \cite{chessgoals_database}, which consists of player ratings in four sites, i.e., Chess.com \cite{chesscom_about}, FIDE, USCF, and Lichess. 
To investigate the mapping, 3 plots are first created, i.e., Lichess to USCF, Lichess to FIDE, and USCF to FIDE. A linear model is then fitted for each plot by MSE minimisation. The results are  shown in Figure \ref{fig: lr regression}. Since the mapping between these 3 rating systems are monotonic as demonstrated in Figure \ref{fig: lr regression}, the spinning top geometry observed in Figure \ref{fig: nash clustering and rps cycles elo} would still hold when mapped to USCF or FIDE rating systems. This is because the relative ordering of the Nash clusters or pure strategies based on Elo rating will be preserved. Therefore, our results can be  direct translated and cross-validated to different rating systems.



\section{Implications  for Learning} \label{section: implications of non-trans}

\begin{figure*}[!t]
    \centering
    \vspace{-15pt}
    \subfloat[]{\includegraphics[width = 0.33\textwidth]{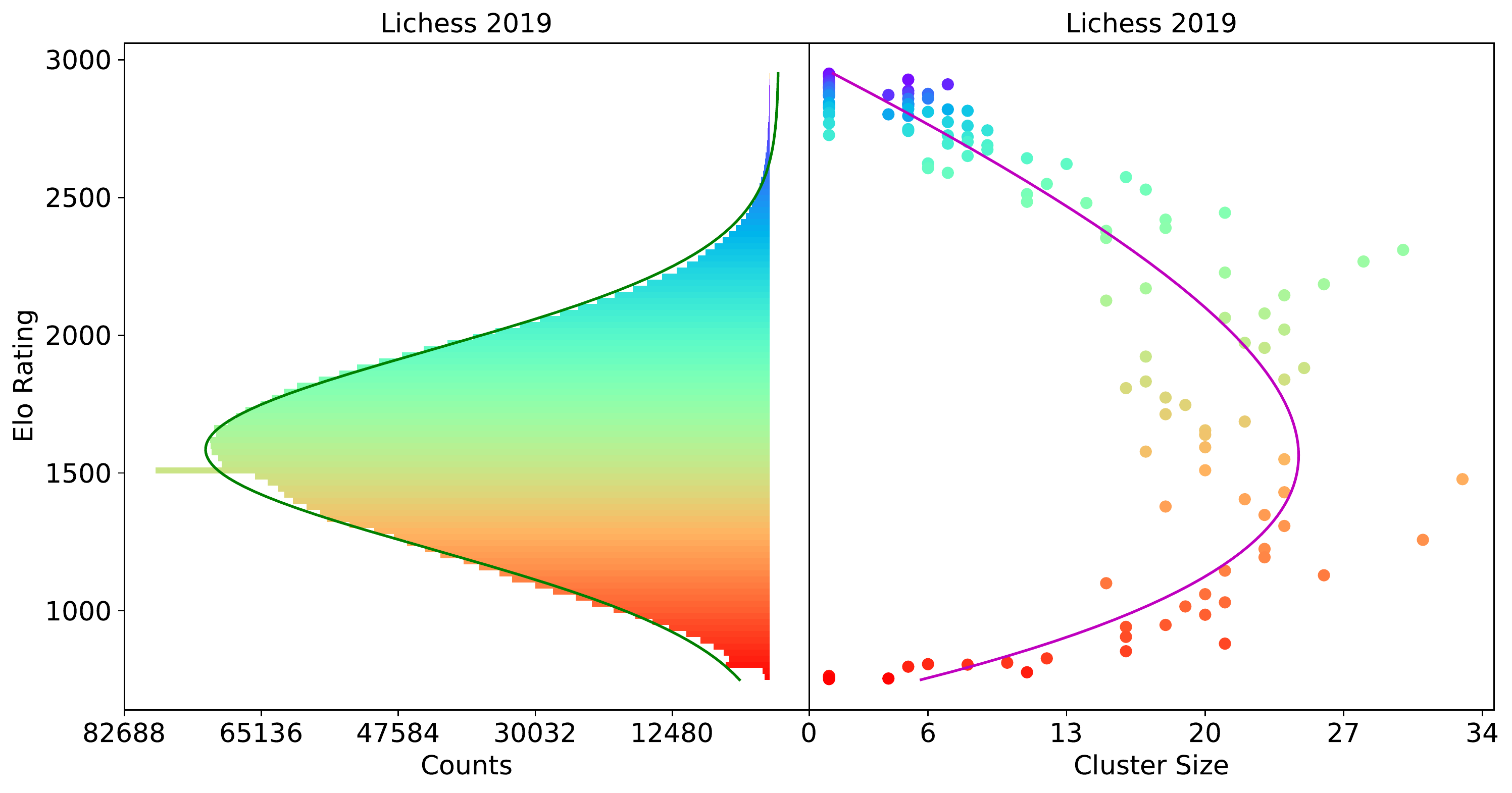}} %
    \subfloat[]{\includegraphics[width = 0.33\textwidth]{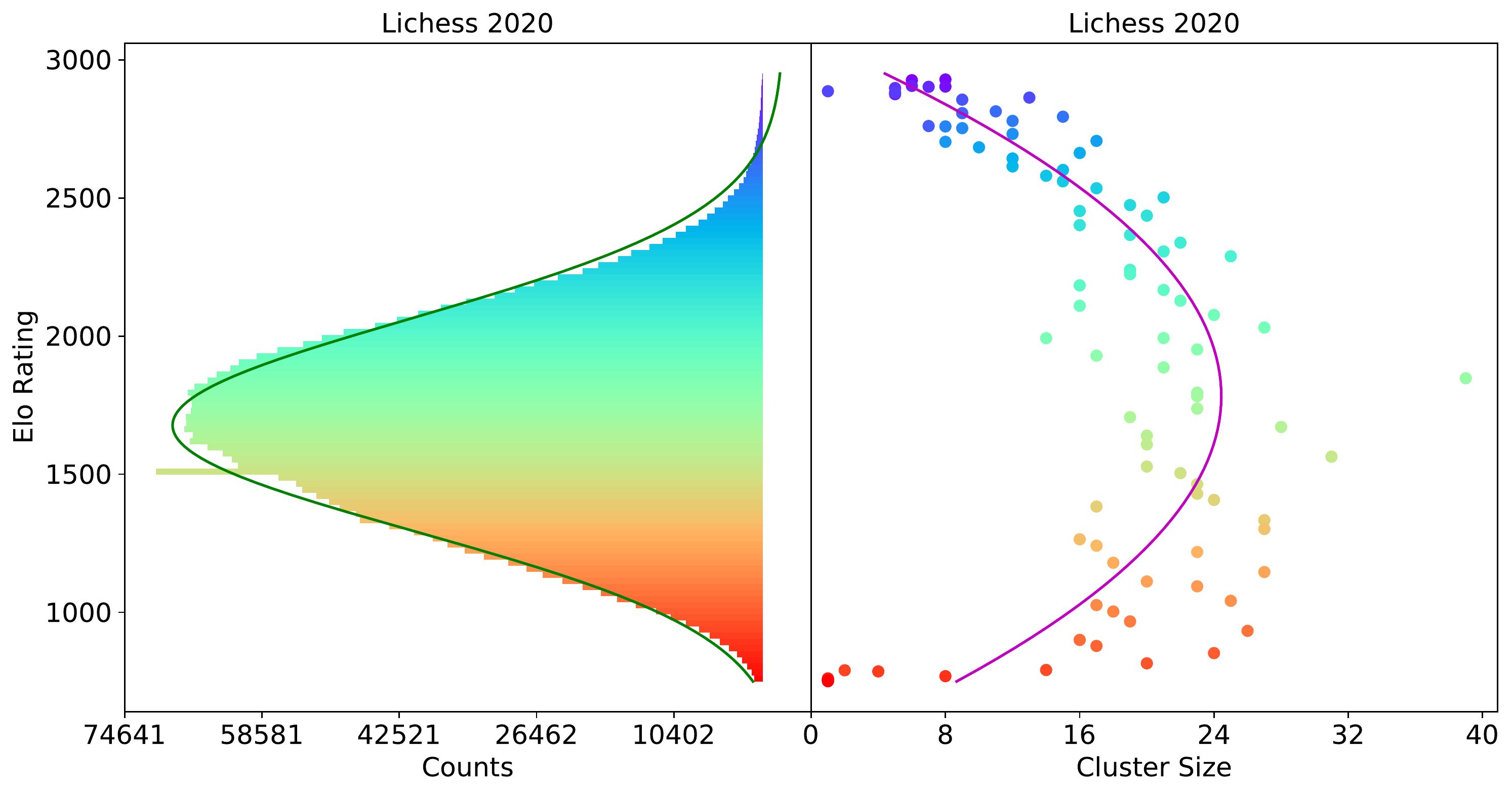}} %
    \subfloat[]{\includegraphics[width = 0.33\textwidth]{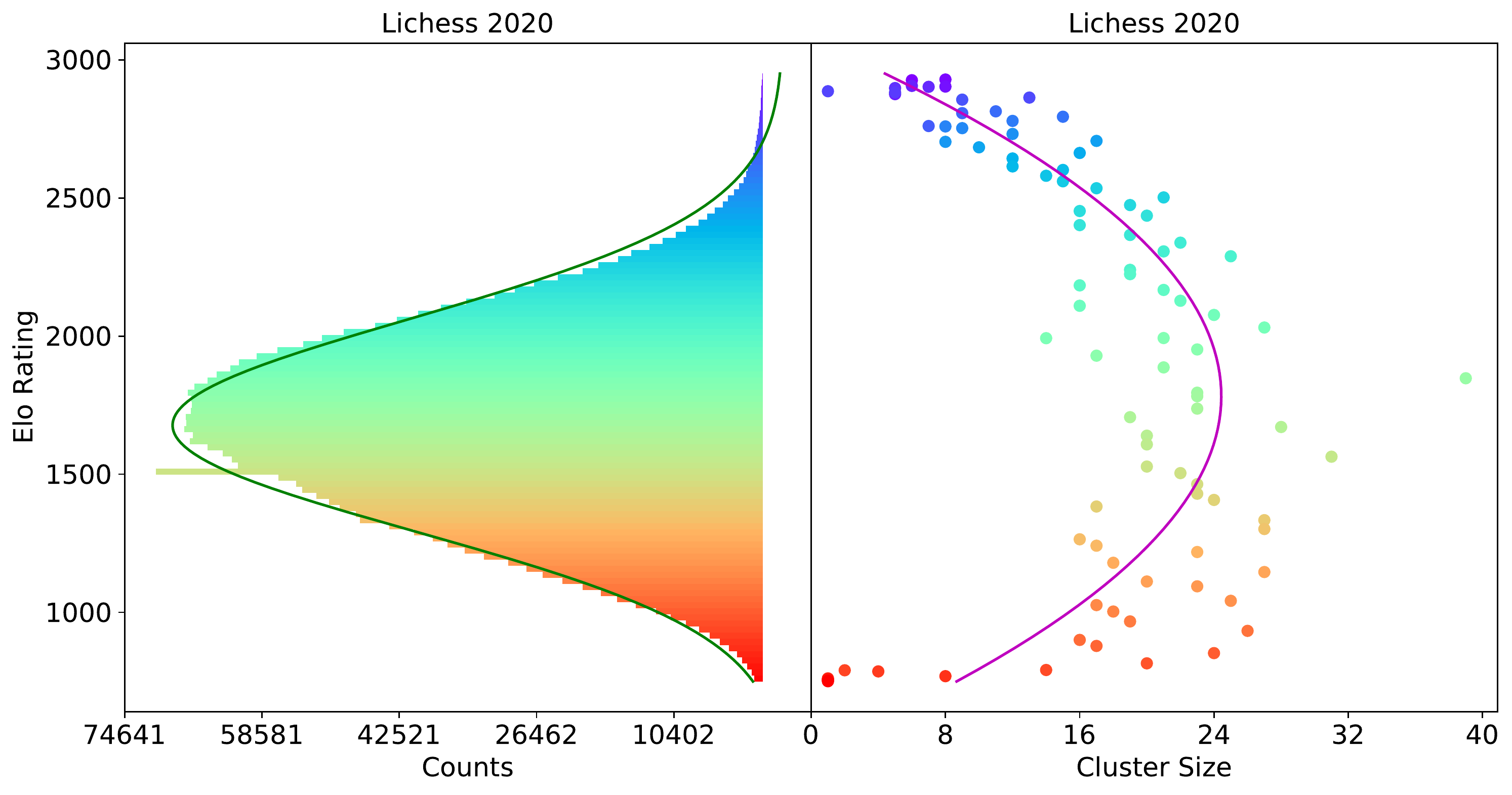}}
    \vspace{-10pt}
    \caption{(Left of each plot) Histogram of Player Elo Ratings and (Right of each plot) Nash Clustering on Lichess 2019 (a), Lichess 2020 (b) and FICS 2019 (c). See Figure \ref{fig: full results} of Appendix \ref{appendix: full results} for the full version of the results. The peak of the histogram as well as the bulk of the data consistently lies in the $1300$ to $1700$ Elo rating range, which tallies with the range of Elo rating in which players tend to get stuck as seen in \cite{chesscom_rating_stuck, lichess_rating_stuck, lichess_rating_stuck_2}, suggesting that players tend to get stuck in this range. Furthermore, the peaks of the histograms coincides with the peak of the fitted skewed normal curve on the Nash Clustering plots. This implies that high non-transitivity in the $1300$ to $1700$ rating range is one potential cause of players getting stuck in this range.}
    \label{fig: nash clustering and histogram}
    \Description{Histogram of player ratings from Lichess 2019 and 2020 games, as well as FICS 2019 games, compared with their corresponding non-transitivity plots using Nash Clustering.}
    \vspace{-5pt}
\end{figure*}

Based on the discovered geometry, we  investigate the impact of non-transitivity on multi-agent reinforcement learning \cite{yang2020overview}. 
Spcficially we design experiments to verify the influence of non-transitivity on a particular type of MARL algorithm: population-based learning \cite{lanctot2017unified,perez2021modelling,liu2021unifying,dinh2021online}, a technique that underpins many existing empirical success on developing game AIs \cite{zhou2021malib,lanctot2019openspiel}. 
  In fact, on simulated strategies \cite{czarnecki2020real}, it has been found that if the spinning top geometry exists in the strategy space, then the larger the non-transitivity of the game is in terms of layer size (for $k$-layered geometry) or Nash cluster size (for Nash Clustering), the larger is the population size required for methods such as fixed-memory fictitious play to  converge. We attempt to verify if the above conclusion still holds when real-world data is used instead. Furthermore, observing the above convergence trend with respect to the starting population size provides an even stronger evidence for the existence of the spinning top geometry in the strategy space of real-world Chess. 

As a proxy for population-based training method, fixed-memory Fictitious Play starts with a population (or set) of pure strategies $P^0$, of a fixed size $k$. The starting pure strategies in $P^0$ are picked to be the $k$ weakest pure strategies, based on the win-rate against other pure strategies, as shown in Equation \ref{strategy win-rate equation}. At every iteration $t$ of the fictitious play, the oldest pure strategy in the population $P^{t-1}$ is replaced with a pure strategy that is not in $P^{t-1}$. The replacement strategy is required to beat all the strategies in the current population on average, i.e., suppose that $\pi$ is the replacement strategy at iteration $t$. Then $\sum_{s \in P^{t-1}} M(\pi, s) > 0$. If there are multiple strategies satisfying this condition, then the selected strategy is the one with the lowest win-rate, i.e., $\pi = \argmin_{s \in S_a} TS(s)$, where $S_a = \{s \in S_g \backslash P^{t-1}, \sum_{r \in P^{t-1}} M(s, r) > 0\}$.  

To measure the  performance at every iteration of the fictitious play, we use the expected average payoff of the population at that iteration. Suppose at any iteration $t$, the probability allocation over population $P^t$, which is of size $k$, is $\mathbf{p^t}$. Given the payoff matrix $\boldsymbol{\mathcal{M}}$ is of size $m \times m$, the performance of $WR(P^t)$ is written as 
\begin{equation}
WR(P^t) = \sum_{i \in [k]} \frac{\mathbf{p^t}_i}{m} \sum_{j \in [m]} M(P^t_i, S_j), 
\end{equation}
where $S_j$ is the pure strategy corresponding to the $j^{th}$ column of $\boldsymbol{\mathcal{M}}$. Fixed-memory fictitious play is then carried out for a number of starting population sizes. The results are shown in Figure \ref{fig: nash clustering and training}. 


However, the minimum population size needed for training to converge does not always correspond to the largest Nash cluster size, and there are  two contributing factors. The first factor is that covering a full Nash cluster, i.e., having a full Nash cluster in the population at some iteration, only guarantees the replacement strategy at that iteration to be from a cluster of greater transitive strength than the covered cluster. There are no guarantees that this causes subsequent iterations to cover Nash clusters of increasing transitive strength. The second factor is that the convergence guarantee with respect to the $k$-layered game of skill geometry defined in Definition \ref{definition: k-layered}. Since Nash Clustering is a relaxation of this geometry, covering a full Nash cluster does not guarantee that the  minimal population size requirement will be satisfied.


\section{Link to Human Skill Progression}
Section \ref{section: implications of non-trans} demonstrates that non-transitivity has implications on the performance improvement of AI agents during training. On the other hand, one of the most common issue faced by Chess players is the phenomenon of being stuck at a certain Elo rating \cite{chesscom_rating_stuck, lichess_rating_stuck, lichess_rating_stuck_2}. Having performed experiments using real-data, we also investigate whether non-transitivity has implications on human skill or rating progression, in particular, whether there exists a link between the non-transitivity of the strategy space of real-world Chess to the phenomenon of players getting stuck in certain Elo ratings. This is conducted by comparing the histogram of player ratings at every year using the data  from Section \ref{subsection: payoff matrix construction}, with the corresponding Nash Clustering plot. The results are shown in Figure \ref{fig: nash clustering and histogram}.

A plausible explanation is that in the $1300$-$1700$ Elo rating range, there exists a lot of RPS cycles from Figure \ref{fig: nash clustering and rps cycles elo}, which means there exists large number of counter strategies. Therefore, to improve from this Elo rating range, players would need to learn how to win against many strategies, for example, by dedicating more efforts to learn  diverse opening tactics. Otherwise, their play-style would be at risk of getting countered by some strategies at that level. This is in contrast to ratings where non-transitivity is low. With a lower number of counter strategies, it is arguably easier to improve, since there are relatively fewer strategies to adapt towards.

\section{Conclusion}
The goal of this paper was to measure the non-transitivity of Chess games, and investigate its potential implications for training effective AI agents, as well as on human skill progression.  
Our work is novel in the sense that we, for the first time, profile the geometry of the strategy space based on the real-world data from human players; this  
in turn prompts the discovery of the spinning top geometry that was previously only observed  on AI-generated strategies, and more importantly,  the discovery of the relationship between the degree of non-transitivity and  the  progression of human skills in playing Chess. 
Additionally, the discovered relationship also indicates that in order to overcome the non-transitivity in the strategy space, it is imperative to maintain large population sizes and enforcing diversity when applying population-based  methods in training AIs. 
Throughout our analysis, we managed to tackle several practical challenges. For example, despite the variations of Elo rating systems for Chess, we introduced effective mappings between those rating systems on over one billion match records; this enabled direct translations and cross-validations of  our findings  across different rating systems.
Finally, we argue that although we only study Chess in this paper, our conclusions  should also hold  for other complex zero-sum  games such as Go, DOTA and StarCraft; we leave the verification on real-world data for future work. 
\clearpage
\bibliographystyle{ACM-Reference-Format}
\bibliography{sample-base}


\begin{thebibliography}{40}


\ifx \showCODEN    \undefined \def \showCODEN     #1{\unskip}     \fi
\ifx \showDOI      \undefined \def \showDOI       #1{#1}\fi
\ifx \showISBNx    \undefined \def \showISBNx     #1{\unskip}     \fi
\ifx \showISBNxiii \undefined \def \showISBNxiii  #1{\unskip}     \fi
\ifx \showISSN     \undefined \def \showISSN      #1{\unskip}     \fi
\ifx \showLCCN     \undefined \def \showLCCN      #1{\unskip}     \fi
\ifx \shownote     \undefined \def \shownote      #1{#1}          \fi
\ifx \showarticletitle \undefined \def \showarticletitle #1{#1}   \fi
\ifx \showURL      \undefined \def \showURL       {\relax}        \fi
\providecommand\bibfield[2]{#2}
\providecommand\bibinfo[2]{#2}
\providecommand\natexlab[1]{#1}
\providecommand\showeprint[2][]{arXiv:#2}

\bibitem[\protect\citeauthoryear{Balduzzi, Garnelo, Bachrach, Czarnecki,
  Perolat, Jaderberg, and Graepel}{Balduzzi et~al\mbox{.}}{2019}]%
        {balduzzi2019open}
\bibfield{author}{\bibinfo{person}{David Balduzzi}, \bibinfo{person}{Marta
  Garnelo}, \bibinfo{person}{Yoram Bachrach}, \bibinfo{person}{Wojciech
  Czarnecki}, \bibinfo{person}{Julien Perolat}, \bibinfo{person}{Max
  Jaderberg}, {and} \bibinfo{person}{Thore Graepel}.}
  \bibinfo{year}{2019}\natexlab{}.
\newblock \showarticletitle{Open-ended learning in symmetric zero-sum games}.
  In \bibinfo{booktitle}{\emph{International Conference on Machine Learning}}.
  PMLR, \bibinfo{pages}{434--443}.
\newblock


\bibitem[\protect\citeauthoryear{Balduzzi, Tuyls, P{\'e}rolat, and
  Graepel}{Balduzzi et~al\mbox{.}}{2018}]%
        {balduzzi2018re}
\bibfield{author}{\bibinfo{person}{D Balduzzi}, \bibinfo{person}{K Tuyls},
  \bibinfo{person}{J P{\'e}rolat}, {and} \bibinfo{person}{T Graepel}.}
  \bibinfo{year}{2018}\natexlab{}.
\newblock \showarticletitle{Re-evaluating evaluation.}. In
  \bibinfo{booktitle}{\emph{NeurIPS}}. Neural Information Processing Systems
  Foundation, Inc., \bibinfo{pages}{3272--3283}.
\newblock


\bibitem[\protect\citeauthoryear{Bj{\"o}rklund, Husfeldt, and
  Khanna}{Bj{\"o}rklund et~al\mbox{.}}{2004}]%
        {bjorklund2004approximating}
\bibfield{author}{\bibinfo{person}{Andreas Bj{\"o}rklund},
  \bibinfo{person}{Thore Husfeldt}, {and} \bibinfo{person}{Sanjeev Khanna}.}
  \bibinfo{year}{2004}\natexlab{}.
\newblock \showarticletitle{Approximating longest directed paths and cycles}.
  In \bibinfo{booktitle}{\emph{International Colloquium on Automata, Languages,
  and Programming}}. Springer, \bibinfo{pages}{222--233}.
\newblock


\bibitem[\protect\citeauthoryear{Campbell, Hoane~Jr, and Hsu}{Campbell
  et~al\mbox{.}}{2002}]%
        {campbell2002deep}
\bibfield{author}{\bibinfo{person}{Murray Campbell}, \bibinfo{person}{A~Joseph
  Hoane~Jr}, {and} \bibinfo{person}{Feng-hsiung Hsu}.}
  \bibinfo{year}{2002}\natexlab{}.
\newblock \showarticletitle{Deep blue}.
\newblock \bibinfo{journal}{\emph{Artificial intelligence}}
  \bibinfo{volume}{134}, \bibinfo{number}{1-2} (\bibinfo{year}{2002}),
  \bibinfo{pages}{57--83}.
\newblock


\bibitem[\protect\citeauthoryear{Chaslot, Bakkes, Szita, and Spronck}{Chaslot
  et~al\mbox{.}}{2008}]%
        {chaslot2008monte}
\bibfield{author}{\bibinfo{person}{Guillaume Chaslot}, \bibinfo{person}{Sander
  Bakkes}, \bibinfo{person}{Istvan Szita}, {and} \bibinfo{person}{Pieter
  Spronck}.} \bibinfo{year}{2008}\natexlab{}.
\newblock \showarticletitle{Monte-Carlo Tree Search: A New Framework for Game
  AI.}
\newblock \bibinfo{journal}{\emph{AIIDE}}  \bibinfo{volume}{8}
  (\bibinfo{year}{2008}), \bibinfo{pages}{216--217}.
\newblock


\bibitem[\protect\citeauthoryear{{Chess.com}}{{Chess.com}}{[n.\,d.]}]%
        {chesscom_about}
\bibfield{author}{\bibinfo{person}{{Chess.com}}.}
  \bibinfo{year}{[n.\,d.]}\natexlab{}.
\newblock \bibinfo{title}{About Chess.com}.
\newblock \bibinfo{howpublished}{\url{https://www.chess.com/about}}.
\newblock
\newblock
\shownote{Retrieved Aug 16, 2021}.


\bibitem[\protect\citeauthoryear{{Chess.com}}{{Chess.com}}{2019}]%
        {chesscom_rating_stuck}
\bibfield{author}{\bibinfo{person}{{Chess.com}}.}
  \bibinfo{year}{2019}\natexlab{}.
\newblock \bibinfo{title}{Stuck at the same rating for a long time}.
\newblock
\newblock
\urldef\tempurl%
\url{https://www.chess.com/forum/view/general/stuck-at-the-same-rating-for-a-long-time-1}
\showURL{%
\tempurl}


\bibitem[\protect\citeauthoryear{{ChessGoals Rating Comparison
  Database}}{{ChessGoals Rating Comparison Database}}{[n.\,d.]}]%
        {chessgoals_database}
\bibfield{author}{\bibinfo{person}{{ChessGoals Rating Comparison Database}}.}
  \bibinfo{year}{[n.\,d.]}\natexlab{}.
\newblock
\newblock
\urldef\tempurl%
\url{https://chessgoals.com/rating-comparison/}
\showURL{%
\tempurl}
\newblock
\shownote{Retrieved Aug 16, 2021}.


\bibitem[\protect\citeauthoryear{Czarnecki, Gidel, Tracey, Tuyls, Omidshafiei,
  Balduzzi, and Jaderberg}{Czarnecki et~al\mbox{.}}{2020}]%
        {czarnecki2020real}
\bibfield{author}{\bibinfo{person}{Wojciech~M Czarnecki},
  \bibinfo{person}{Gauthier Gidel}, \bibinfo{person}{Brendan Tracey},
  \bibinfo{person}{Karl Tuyls}, \bibinfo{person}{Shayegan Omidshafiei},
  \bibinfo{person}{David Balduzzi}, {and} \bibinfo{person}{Max Jaderberg}.}
  \bibinfo{year}{2020}\natexlab{}.
\newblock \showarticletitle{Real World Games Look Like Spinning Tops}.
\newblock \bibinfo{journal}{\emph{Advances in Neural Information Processing
  Systems}}  \bibinfo{volume}{33} (\bibinfo{year}{2020}).
\newblock


\bibitem[\protect\citeauthoryear{Daskalakis}{Daskalakis}{2011}]%
        {daskalakis_lec02}
\bibfield{author}{\bibinfo{person}{Constantinos Daskalakis}.}
  \bibinfo{year}{2011}\natexlab{}.
\newblock \bibinfo{title}{Two-player Zero-sum Games and Linear Programming.}
\newblock
\newblock
\newblock
\shownote{Available:
  \url{http://people.csail.mit.edu/costis/6896sp10/lec2.pdf}}.


\bibitem[\protect\citeauthoryear{Deng, Li, Mguni, Wang, and Yang}{Deng
  et~al\mbox{.}}{2021}]%
        {deng2021complexity}
\bibfield{author}{\bibinfo{person}{Xiaotie Deng}, \bibinfo{person}{Yuhao Li},
  \bibinfo{person}{David~Henry Mguni}, \bibinfo{person}{Jun Wang}, {and}
  \bibinfo{person}{Yaodong Yang}.} \bibinfo{year}{2021}\natexlab{}.
\newblock \showarticletitle{On the Complexity of Computing Markov Perfect
  Equilibrium in General-Sum Stochastic Games}.
\newblock \bibinfo{journal}{\emph{arXiv preprint arXiv:2109.01795}}
  (\bibinfo{year}{2021}).
\newblock


\bibitem[\protect\citeauthoryear{Dinh, Yang, Tian, Nieves, Slumbers, Mguni,
  Ammar, and Wang}{Dinh et~al\mbox{.}}{2021}]%
        {dinh2021online}
\bibfield{author}{\bibinfo{person}{Le~Cong Dinh}, \bibinfo{person}{Yaodong
  Yang}, \bibinfo{person}{Zheng Tian}, \bibinfo{person}{Nicolas~Perez Nieves},
  \bibinfo{person}{Oliver Slumbers}, \bibinfo{person}{David~Henry Mguni},
  \bibinfo{person}{Haitham~Bou Ammar}, {and} \bibinfo{person}{Jun Wang}.}
  \bibinfo{year}{2021}\natexlab{}.
\newblock \showarticletitle{Online Double Oracle}.
\newblock \bibinfo{journal}{\emph{arxiv preprint arxiv:2103.07780}}
  (\bibinfo{year}{2021}).
\newblock


\bibitem[\protect\citeauthoryear{Elo}{Elo}{1978}]%
        {elo1978rating}
\bibfield{author}{\bibinfo{person}{Arpad~E. Elo}.}
  \bibinfo{year}{1978}\natexlab{}.
\newblock \bibinfo{booktitle}{\emph{The Rating of Chessplayers, Past and
  Present}}.
\newblock \bibinfo{publisher}{Arco Pub.}
\newblock


\bibitem[\protect\citeauthoryear{Ensmenger}{Ensmenger}{2012}]%
        {ensmenger2012chess}
\bibfield{author}{\bibinfo{person}{Nathan Ensmenger}.}
  \bibinfo{year}{2012}\natexlab{}.
\newblock \showarticletitle{Is chess the drosophila of artificial intelligence?
  A social history of an algorithm}.
\newblock \bibinfo{journal}{\emph{Social studies of science}}
  \bibinfo{volume}{42}, \bibinfo{number}{1} (\bibinfo{year}{2012}),
  \bibinfo{pages}{5--30}.
\newblock


\bibitem[\protect\citeauthoryear{{FICS}}{{FICS}}{[n.\,d.]}]%
        {fics_about}
\bibfield{author}{\bibinfo{person}{{FICS}}.}
  \bibinfo{year}{[n.\,d.]}\natexlab{}.
\newblock \bibinfo{title}{FICS Free Internet Chess Server}.
\newblock \bibinfo{howpublished}{\url{https://www.freechess.org/}}.
\newblock
\newblock
\shownote{Retrieved Aug 16, 2021}.


\bibitem[\protect\citeauthoryear{Glickman}{Glickman}{1995}]%
        {glickman1995glicko}
\bibfield{author}{\bibinfo{person}{Mark~E. Glickman}.}
  \bibinfo{year}{1995}\natexlab{}.
\newblock \showarticletitle{The glicko system}.
\newblock \bibinfo{journal}{\emph{Boston University}}  \bibinfo{volume}{16}
  (\bibinfo{year}{1995}), \bibinfo{pages}{16--17}.
\newblock


\bibitem[\protect\citeauthoryear{Glickman}{Glickman}{2012}]%
        {glickman2012example}
\bibfield{author}{\bibinfo{person}{Mark~E. Glickman}.}
  \bibinfo{year}{2012}\natexlab{}.
\newblock \showarticletitle{Example of the Glicko-2 system}.
\newblock \bibinfo{journal}{\emph{Boston University}} (\bibinfo{year}{2012}),
  \bibinfo{pages}{1--6}.
\newblock


\bibitem[\protect\citeauthoryear{Glickman and Doan}{Glickman and Doan}{2020}]%
        {uscf_rating_system}
\bibfield{author}{\bibinfo{person}{Mark~E. Glickman} {and}
  \bibinfo{person}{Thomas Doan}.} \bibinfo{year}{2020}\natexlab{}.
\newblock \bibinfo{title}{The US Chess Rating System}.
\newblock
  \bibinfo{howpublished}{\url{https://new.uschess.org/sites/default/files/media/documents/the-us-chess-rating-system-revised-september-2020.pdf}}.
\newblock


\bibitem[\protect\citeauthoryear{Hart}{Hart}{1992}]%
        {hart1992games}
\bibfield{author}{\bibinfo{person}{Sergiu Hart}.}
  \bibinfo{year}{1992}\natexlab{}.
\newblock \showarticletitle{Games in extensive and strategic forms}.
\newblock \bibinfo{journal}{\emph{Handbook of game theory with economic
  applications}}  \bibinfo{volume}{1} (\bibinfo{year}{1992}),
  \bibinfo{pages}{19--40}.
\newblock


\bibitem[\protect\citeauthoryear{{International Chess
  Federation}}{{International Chess Federation}}{[n.\,d.]a}]%
        {fide_charter}
\bibfield{author}{\bibinfo{person}{{International Chess Federation}}.}
  \bibinfo{year}{[n.\,d.]}\natexlab{a}.
\newblock \bibinfo{title}{FIDE Charter, Art. 1 FIDE – Name, legal status and
  seat}.
\newblock
  \bibinfo{howpublished}{\url{https://handbook.fide.com/files/handbook/FIDECharter2020.pdf}}.
\newblock
\newblock
\shownote{Retrieved Aug 16, 2021}.


\bibitem[\protect\citeauthoryear{{International Chess
  Federation}}{{International Chess Federation}}{[n.\,d.]b}]%
        {fide_handbook_B022017}
\bibfield{author}{\bibinfo{person}{{International Chess Federation}}.}
  \bibinfo{year}{[n.\,d.]}\natexlab{b}.
\newblock \bibinfo{title}{FIDE Handbook, B. Permanent Commissions / 02. FIDE
  Rating Regulations (Qualification Commission) / FIDE Rating Regulations
  effective from 1 July 2017 (with amendments effective from 1 February 2021)
  /}.
\newblock
  \bibinfo{howpublished}{\url{https://handbook.fide.com/chapter/B022017}}.
\newblock
\newblock
\shownote{Retrieved Aug 20, 2021}.


\bibitem[\protect\citeauthoryear{Kuhn}{Kuhn}{2009}]%
        {kuhn2009extensive}
\bibfield{author}{\bibinfo{person}{Harold~William Kuhn}.}
  \bibinfo{year}{2009}\natexlab{}.
\newblock \showarticletitle{Extensive games}.
\newblock In \bibinfo{booktitle}{\emph{Lectures on the Theory of Games
  (AM-37)}}. \bibinfo{publisher}{Princeton University Press},
  \bibinfo{pages}{59--80}.
\newblock


\bibitem[\protect\citeauthoryear{Lanctot, Lockhart, Lespiau, Zambaldi,
  Upadhyay, P{\'e}rolat, Srinivasan, Timbers, Tuyls, Omidshafiei,
  et~al\mbox{.}}{Lanctot et~al\mbox{.}}{2019}]%
        {lanctot2019openspiel}
\bibfield{author}{\bibinfo{person}{Marc Lanctot}, \bibinfo{person}{Edward
  Lockhart}, \bibinfo{person}{Jean-Baptiste Lespiau}, \bibinfo{person}{Vinicius
  Zambaldi}, \bibinfo{person}{Satyaki Upadhyay}, \bibinfo{person}{Julien
  P{\'e}rolat}, \bibinfo{person}{Sriram Srinivasan}, \bibinfo{person}{Finbarr
  Timbers}, \bibinfo{person}{Karl Tuyls}, \bibinfo{person}{Shayegan
  Omidshafiei}, {et~al\mbox{.}}} \bibinfo{year}{2019}\natexlab{}.
\newblock \showarticletitle{OpenSpiel: A framework for reinforcement learning
  in games}.
\newblock \bibinfo{journal}{\emph{arXiv preprint arXiv:1908.09453}}
  (\bibinfo{year}{2019}).
\newblock


\bibitem[\protect\citeauthoryear{Lanctot, Zambaldi, Gruslys, Lazaridou, Tuyls,
  P{\'e}rolat, Silver, and Graepel}{Lanctot et~al\mbox{.}}{2017}]%
        {lanctot2017unified}
\bibfield{author}{\bibinfo{person}{Marc Lanctot}, \bibinfo{person}{Vinicius
  Zambaldi}, \bibinfo{person}{Audr{\=u}nas Gruslys}, \bibinfo{person}{Angeliki
  Lazaridou}, \bibinfo{person}{Karl Tuyls}, \bibinfo{person}{Julien
  P{\'e}rolat}, \bibinfo{person}{David Silver}, {and} \bibinfo{person}{Thore
  Graepel}.} \bibinfo{year}{2017}\natexlab{}.
\newblock \showarticletitle{A unified game-theoretic approach to multiagent
  reinforcement learning}. In \bibinfo{booktitle}{\emph{Proceedings of the 31st
  International Conference on Neural Information Processing Systems}}.
  \bibinfo{pages}{4193--4206}.
\newblock


\bibitem[\protect\citeauthoryear{{Lichess}}{{Lichess}}{[n.\,d.]}]%
        {lichess_about}
\bibfield{author}{\bibinfo{person}{{Lichess}}.}
  \bibinfo{year}{[n.\,d.]}\natexlab{}.
\newblock \bibinfo{title}{About lichess.org}.
\newblock \bibinfo{howpublished}{\url{https://lichess.org/about}}.
\newblock
\newblock
\shownote{Retrieved Aug 16, 2021}.


\bibitem[\protect\citeauthoryear{{Lichess}}{{Lichess}}{2021a}]%
        {lichess_rating_stuck}
\bibfield{author}{\bibinfo{person}{{Lichess}}.}
  \bibinfo{year}{2021}\natexlab{a}.
\newblock \bibinfo{title}{My friends are stuck at 1500-1600 how long will it
  take them to get up to 1700}.
\newblock
\newblock
\urldef\tempurl%
\url{https://lichess.org/forum/general-chess-discussion/my-friends-are-stuck-at-1500-1600-how-long-will-it-take-them-to-get-up-to-1700}
\showURL{%
\tempurl}


\bibitem[\protect\citeauthoryear{{Lichess}}{{Lichess}}{2021b}]%
        {lichess_rating_stuck_2}
\bibfield{author}{\bibinfo{person}{{Lichess}}.}
  \bibinfo{year}{2021}\natexlab{b}.
\newblock \bibinfo{title}{Stuck at 1600 rating}.
\newblock
\newblock
\urldef\tempurl%
\url{https://lichess.org/forum/general-chess-discussion/stuck-at-1600-rating}
\showURL{%
\tempurl}


\bibitem[\protect\citeauthoryear{Liu, Jia, Wen, Yang, Hu, Chen, Fan, and
  Hu}{Liu et~al\mbox{.}}{2021}]%
        {liu2021unifying}
\bibfield{author}{\bibinfo{person}{Xiangyu Liu}, \bibinfo{person}{Hangtian
  Jia}, \bibinfo{person}{Ying Wen}, \bibinfo{person}{Yaodong Yang},
  \bibinfo{person}{Yujing Hu}, \bibinfo{person}{Yingfeng Chen},
  \bibinfo{person}{Changjie Fan}, {and} \bibinfo{person}{Zhipeng Hu}.}
  \bibinfo{year}{2021}\natexlab{}.
\newblock \showarticletitle{Unifying Behavioral and Response Diversity for
  Open-ended Learning in Zero-sum Games}.
\newblock \bibinfo{journal}{\emph{arXiv preprint arXiv:2106.04958}}
  (\bibinfo{year}{2021}).
\newblock


\bibitem[\protect\citeauthoryear{McIlroy-Young, Sen, Kleinberg, and
  Anderson}{McIlroy-Young et~al\mbox{.}}{2020}]%
        {mcilroy2020aligning}
\bibfield{author}{\bibinfo{person}{Reid McIlroy-Young},
  \bibinfo{person}{Siddhartha Sen}, \bibinfo{person}{Jon Kleinberg}, {and}
  \bibinfo{person}{Ashton Anderson}.} \bibinfo{year}{2020}\natexlab{}.
\newblock \showarticletitle{Aligning Superhuman AI with Human Behavior: Chess
  as a Model System}. In \bibinfo{booktitle}{\emph{Proceedings of the 26th ACM
  SIGKDD International Conference on Knowledge Discovery \& Data Mining}}.
  \bibinfo{pages}{1677--1687}.
\newblock


\bibitem[\protect\citeauthoryear{Nash}{Nash}{1951}]%
        {nash1951non}
\bibfield{author}{\bibinfo{person}{John Nash}.}
  \bibinfo{year}{1951}\natexlab{}.
\newblock \showarticletitle{Non-cooperative games}.
\newblock \bibinfo{journal}{\emph{Annals of mathematics}}
  (\bibinfo{year}{1951}), \bibinfo{pages}{286--295}.
\newblock


\bibitem[\protect\citeauthoryear{Perez-Nieves, Yang, Slumbers, Mguni, Wen, and
  Wang}{Perez-Nieves et~al\mbox{.}}{2021}]%
        {perez2021modelling}
\bibfield{author}{\bibinfo{person}{Nicolas Perez-Nieves},
  \bibinfo{person}{Yaodong Yang}, \bibinfo{person}{Oliver Slumbers},
  \bibinfo{person}{David~H Mguni}, \bibinfo{person}{Ying Wen}, {and}
  \bibinfo{person}{Jun Wang}.} \bibinfo{year}{2021}\natexlab{}.
\newblock \showarticletitle{Modelling Behavioural Diversity for Learning in
  Open-Ended Games}. In \bibinfo{booktitle}{\emph{International Conference on
  Machine Learning}}. PMLR, \bibinfo{pages}{8514--8524}.
\newblock


\bibitem[\protect\citeauthoryear{Romstad, Costalba, Kiiski, and
  Linscott}{Romstad et~al\mbox{.}}{2017}]%
        {romstad2017stockfish}
\bibfield{author}{\bibinfo{person}{Tord Romstad}, \bibinfo{person}{Marco
  Costalba}, \bibinfo{person}{Joona Kiiski}, {and} \bibinfo{person}{G
  Linscott}.} \bibinfo{year}{2017}\natexlab{}.
\newblock \showarticletitle{Stockfish: A strong open source chess engine}.
\newblock \bibinfo{journal}{\emph{Open Source available at
  https://stockfishchess. org/. Retrieved November 29th}}
  (\bibinfo{year}{2017}).
\newblock


\bibitem[\protect\citeauthoryear{Russell and Norvig}{Russell and
  Norvig}{2002}]%
        {russell2002artificial}
\bibfield{author}{\bibinfo{person}{Stuart Russell} {and} \bibinfo{person}{Peter
  Norvig}.} \bibinfo{year}{2002}\natexlab{}.
\newblock \showarticletitle{Artificial intelligence: a modern approach}.
\newblock  (\bibinfo{year}{2002}).
\newblock


\bibitem[\protect\citeauthoryear{Shannon}{Shannon}{1950}]%
        {shannon1950xxii}
\bibfield{author}{\bibinfo{person}{Claude~E Shannon}.}
  \bibinfo{year}{1950}\natexlab{}.
\newblock \showarticletitle{XXII. Programming a computer for playing chess}.
\newblock \bibinfo{journal}{\emph{The London, Edinburgh, and Dublin
  Philosophical Magazine and Journal of Science}} \bibinfo{volume}{41},
  \bibinfo{number}{314} (\bibinfo{year}{1950}), \bibinfo{pages}{256--275}.
\newblock


\bibitem[\protect\citeauthoryear{Silver, Hubert, Schrittwieser, Antonoglou,
  Lai, Guez, Lanctot, Sifre, Kumaran, Graepel, et~al\mbox{.}}{Silver
  et~al\mbox{.}}{2017}]%
        {silver2017mastering}
\bibfield{author}{\bibinfo{person}{David Silver}, \bibinfo{person}{Thomas
  Hubert}, \bibinfo{person}{Julian Schrittwieser}, \bibinfo{person}{Ioannis
  Antonoglou}, \bibinfo{person}{Matthew Lai}, \bibinfo{person}{Arthur Guez},
  \bibinfo{person}{Marc Lanctot}, \bibinfo{person}{Laurent Sifre},
  \bibinfo{person}{Dharshan Kumaran}, \bibinfo{person}{Thore Graepel},
  {et~al\mbox{.}}} \bibinfo{year}{2017}\natexlab{}.
\newblock \showarticletitle{Mastering chess and shogi by self-play with a
  general reinforcement learning algorithm}.
\newblock \bibinfo{journal}{\emph{arXiv preprint arXiv:1712.01815}}
  (\bibinfo{year}{2017}).
\newblock


\bibitem[\protect\citeauthoryear{{United States Chess Federation}}{{United
  States Chess Federation}}{[n.\,d.]}]%
        {uscf_about}
\bibfield{author}{\bibinfo{person}{{United States Chess Federation}}.}
  \bibinfo{year}{[n.\,d.]}\natexlab{}.
\newblock \bibinfo{title}{About US Chess Federation}.
\newblock \bibinfo{howpublished}{\url{https://new.uschess.org/about}}.
\newblock
\newblock
\shownote{Retrieved Aug 16, 2021}.


\bibitem[\protect\citeauthoryear{Vinyals, Babuschkin, Czarnecki, Mathieu,
  Dudzik, Chung, Choi, Powell, Ewalds, Georgiev, et~al\mbox{.}}{Vinyals
  et~al\mbox{.}}{2019}]%
        {vinyals2019grandmaster}
\bibfield{author}{\bibinfo{person}{Oriol Vinyals}, \bibinfo{person}{Igor
  Babuschkin}, \bibinfo{person}{Wojciech~M Czarnecki},
  \bibinfo{person}{Micha{\"e}l Mathieu}, \bibinfo{person}{Andrew Dudzik},
  \bibinfo{person}{Junyoung Chung}, \bibinfo{person}{David~H Choi},
  \bibinfo{person}{Richard Powell}, \bibinfo{person}{Timo Ewalds},
  \bibinfo{person}{Petko Georgiev}, {et~al\mbox{.}}}
  \bibinfo{year}{2019}\natexlab{}.
\newblock \showarticletitle{Grandmaster level in StarCraft II using multi-agent
  reinforcement learning}.
\newblock \bibinfo{journal}{\emph{Nature}} \bibinfo{volume}{575},
  \bibinfo{number}{7782} (\bibinfo{year}{2019}), \bibinfo{pages}{350--354}.
\newblock


\bibitem[\protect\citeauthoryear{Yang, Luo, Wen, Slumbers, Graves, Bou~Ammar,
  Wang, and Taylor}{Yang et~al\mbox{.}}{2021}]%
        {yang2021diverse}
\bibfield{author}{\bibinfo{person}{Yaodong Yang}, \bibinfo{person}{Jun Luo},
  \bibinfo{person}{Ying Wen}, \bibinfo{person}{Oliver Slumbers},
  \bibinfo{person}{Daniel Graves}, \bibinfo{person}{Haitham Bou~Ammar},
  \bibinfo{person}{Jun Wang}, {and} \bibinfo{person}{Matthew~E Taylor}.}
  \bibinfo{year}{2021}\natexlab{}.
\newblock \showarticletitle{Diverse Auto-Curriculum is Critical for Successful
  Real-World Multiagent Learning Systems}. In
  \bibinfo{booktitle}{\emph{Proceedings of the 20th International Conference on
  Autonomous Agents and MultiAgent Systems}}. \bibinfo{pages}{51--56}.
\newblock


\bibitem[\protect\citeauthoryear{Yang and Wang}{Yang and Wang}{2020}]%
        {yang2020overview}
\bibfield{author}{\bibinfo{person}{Yaodong Yang} {and} \bibinfo{person}{Jun
  Wang}.} \bibinfo{year}{2020}\natexlab{}.
\newblock \showarticletitle{An Overview of Multi-Agent Reinforcement Learning
  from Game Theoretical Perspective}.
\newblock \bibinfo{journal}{\emph{arXiv preprint arXiv:2011.00583}}
  (\bibinfo{year}{2020}).
\newblock


\bibitem[\protect\citeauthoryear{Zhou, Wan, Wang, Wen, Wu, Wen, Yang, Zhang,
  and Wang}{Zhou et~al\mbox{.}}{2021}]%
        {zhou2021malib}
\bibfield{author}{\bibinfo{person}{Ming Zhou}, \bibinfo{person}{Ziyu Wan},
  \bibinfo{person}{Hanjing Wang}, \bibinfo{person}{Muning Wen},
  \bibinfo{person}{Runzhe Wu}, \bibinfo{person}{Ying Wen},
  \bibinfo{person}{Yaodong Yang}, \bibinfo{person}{Weinan Zhang}, {and}
  \bibinfo{person}{Jun Wang}.} \bibinfo{year}{2021}\natexlab{}.
\newblock \showarticletitle{MALib: A Parallel Framework for Population-based
  Multi-agent Reinforcement Learning}.
\newblock \bibinfo{journal}{\emph{arXiv preprint arXiv:2106.07551}}
  (\bibinfo{year}{2021}).
\newblock


\end{thebibliography}


\clearpage

\appendix

\onecolumn

\FloatBarrier
\section{Additional Results} \label{appendix: full results}
\begin{figure}[H]
    \centering
    \includegraphics[width = 0.245\textwidth, height = 0.42 \textheight]{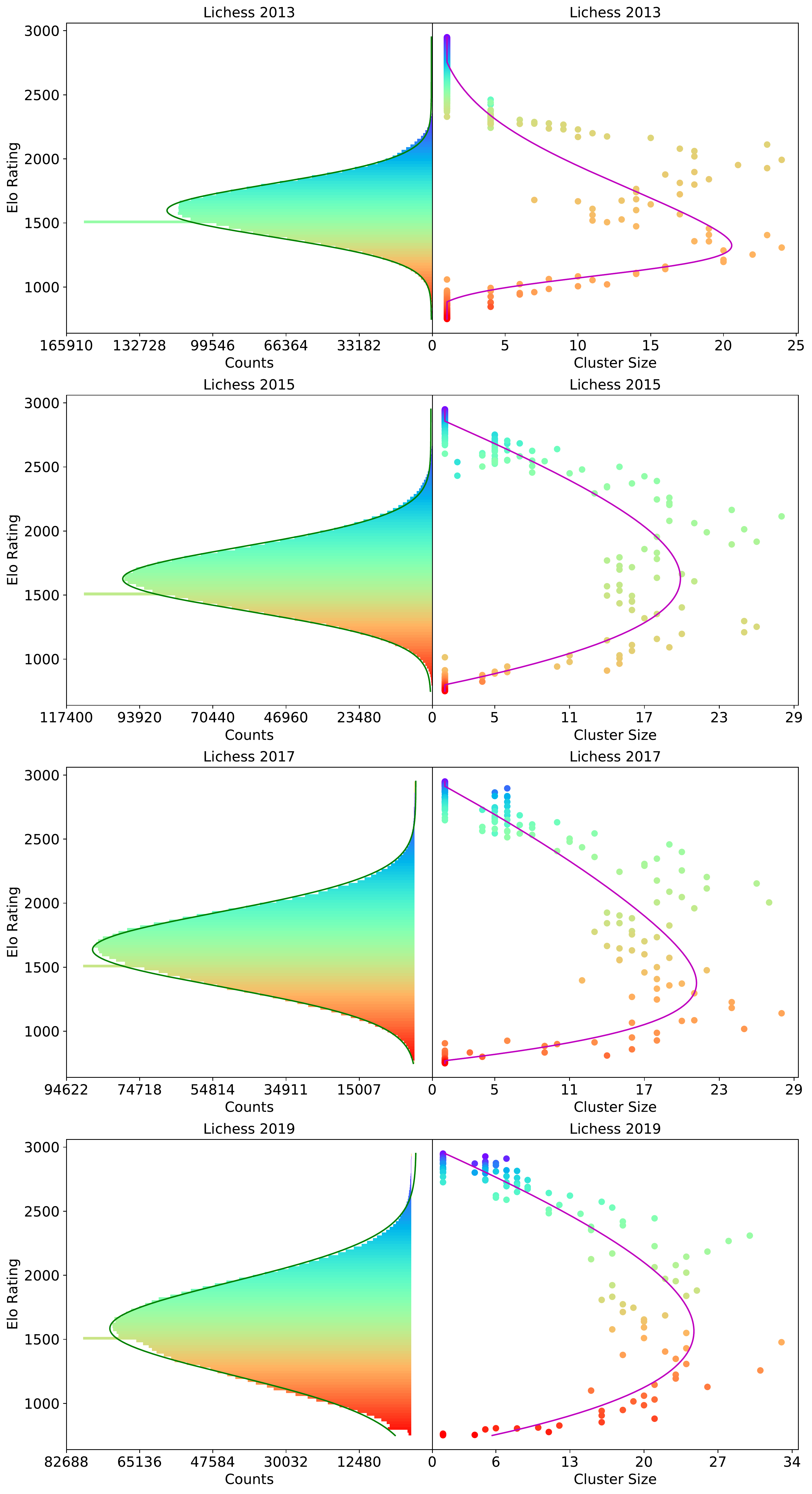} %
    \includegraphics[width = 0.1225\textwidth, height = 0.42 \textheight]{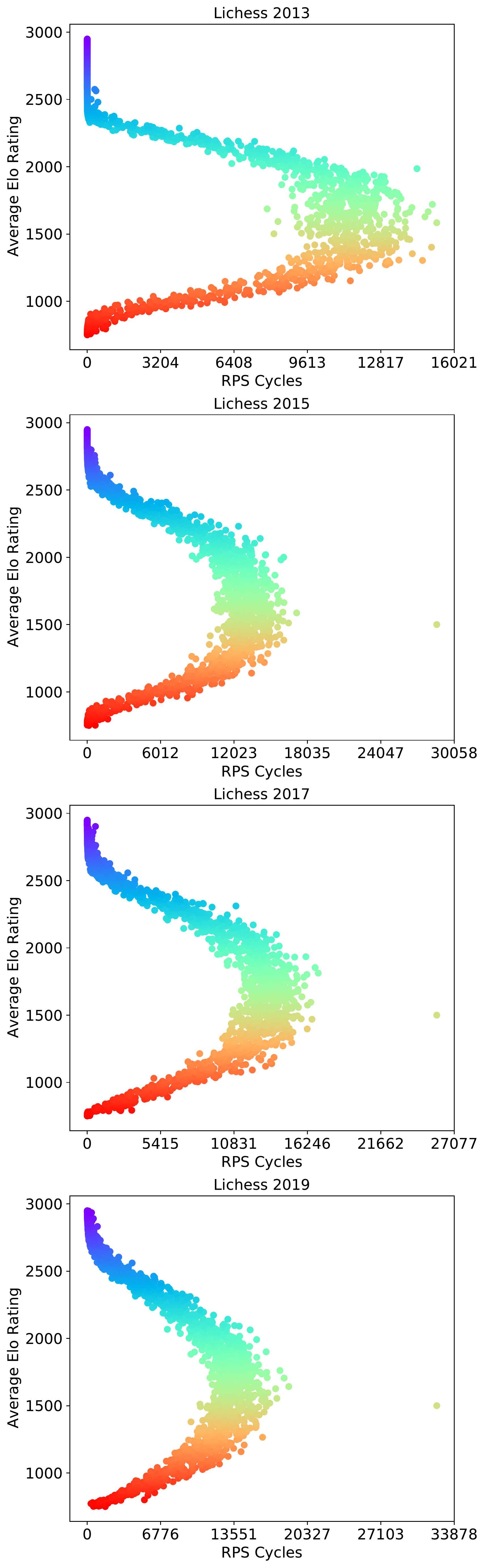} %
    \includegraphics[width = 0.1225\textwidth, height = 0.42 \textheight]{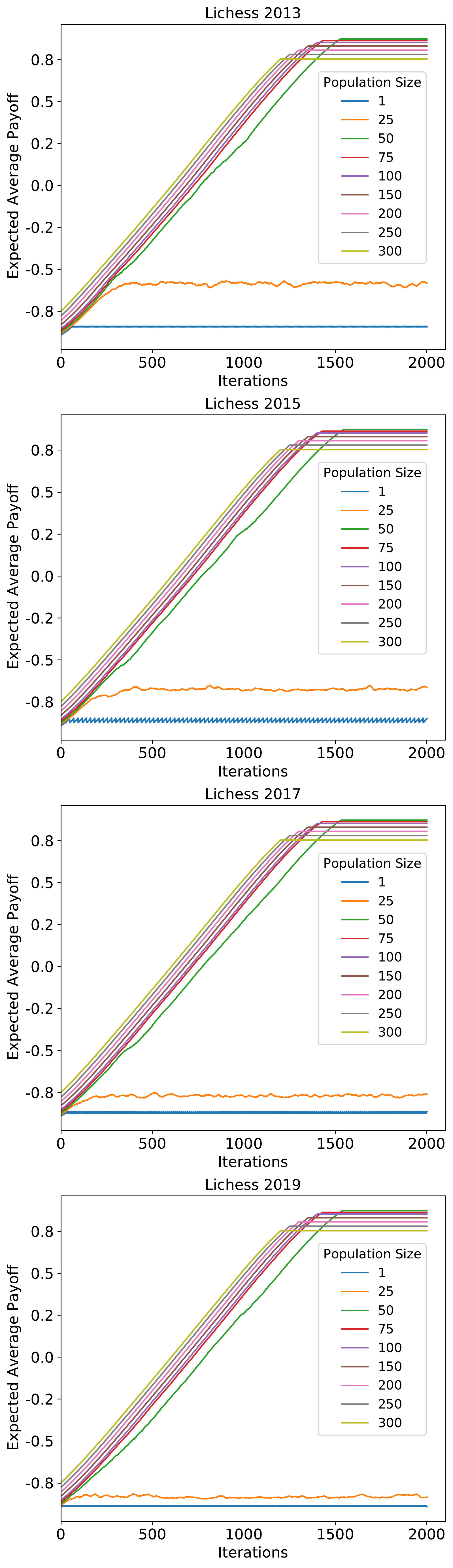}
    \includegraphics[width = 0.245\textwidth, height = 0.42 \textheight]{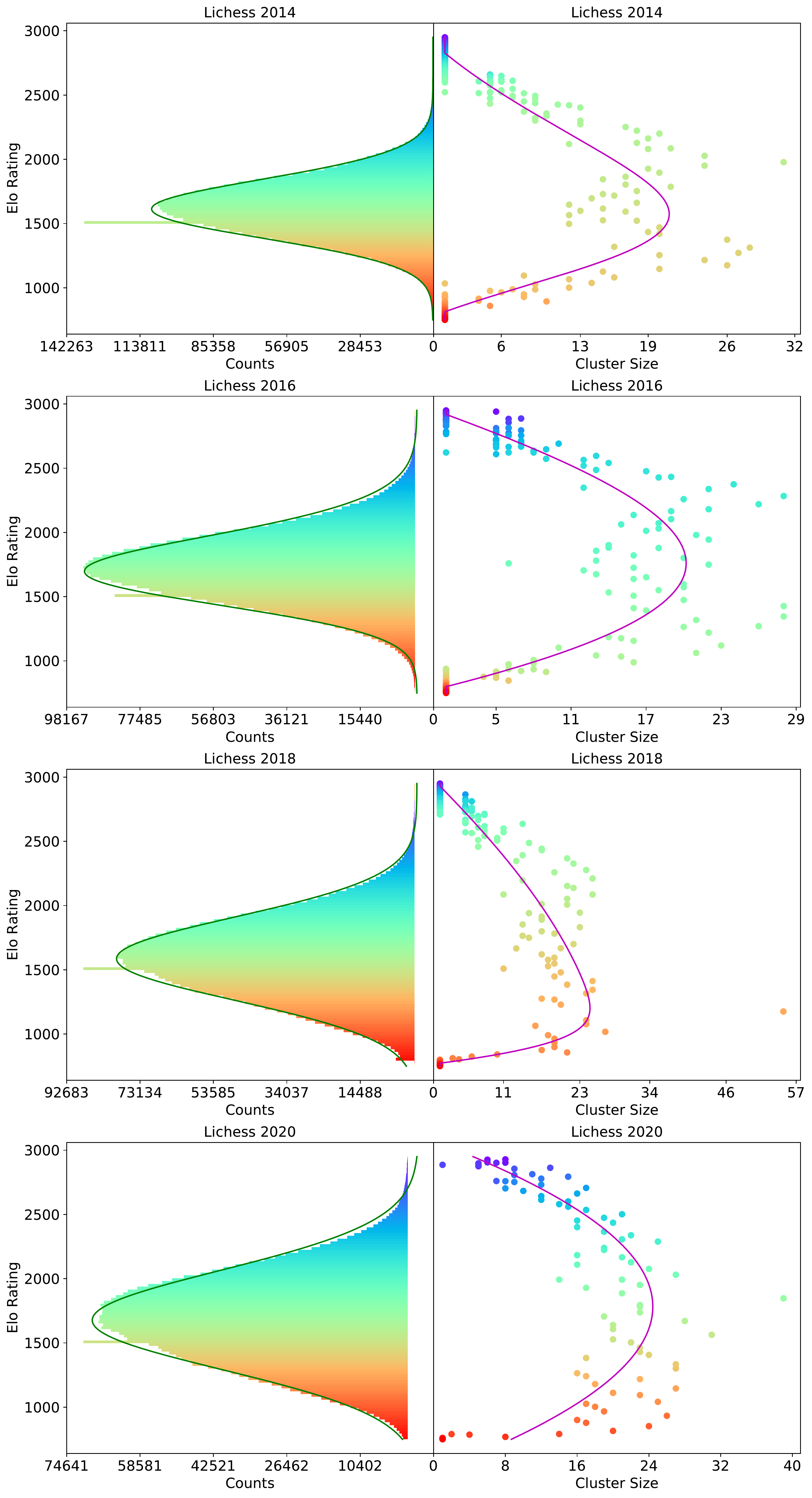} %
    \includegraphics[width = 0.1225\textwidth, height = 0.42 \textheight]{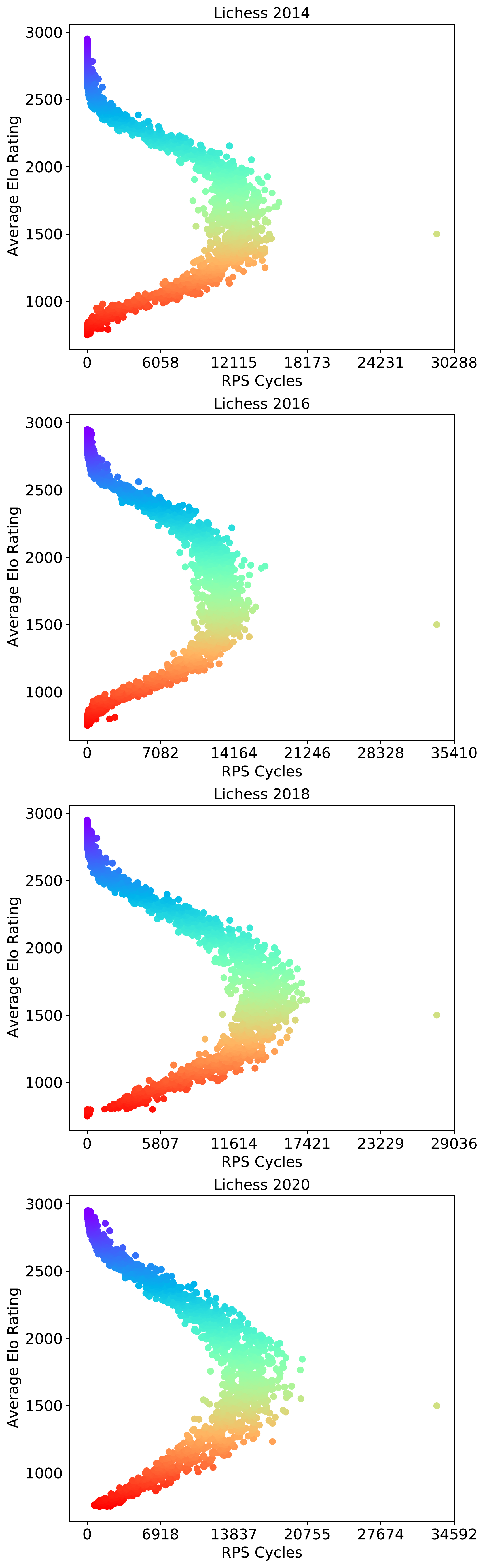} %
    \includegraphics[width = 0.1225\textwidth, height = 0.42 \textheight]{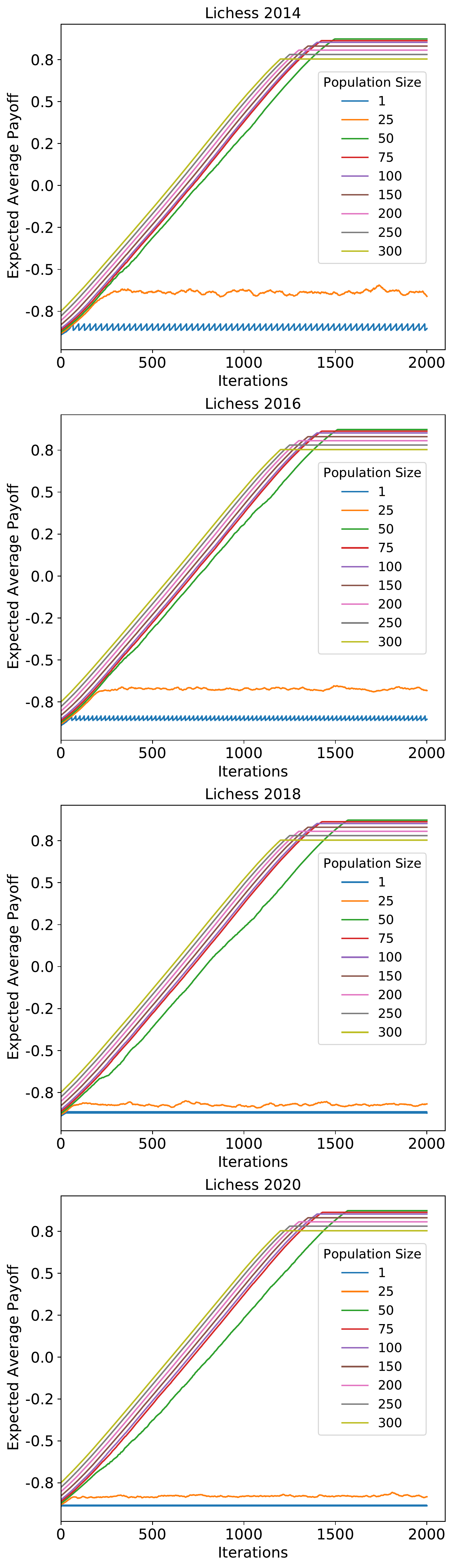}
    
    \includegraphics[width = 0.245\textwidth, height = 0.105 \textheight]{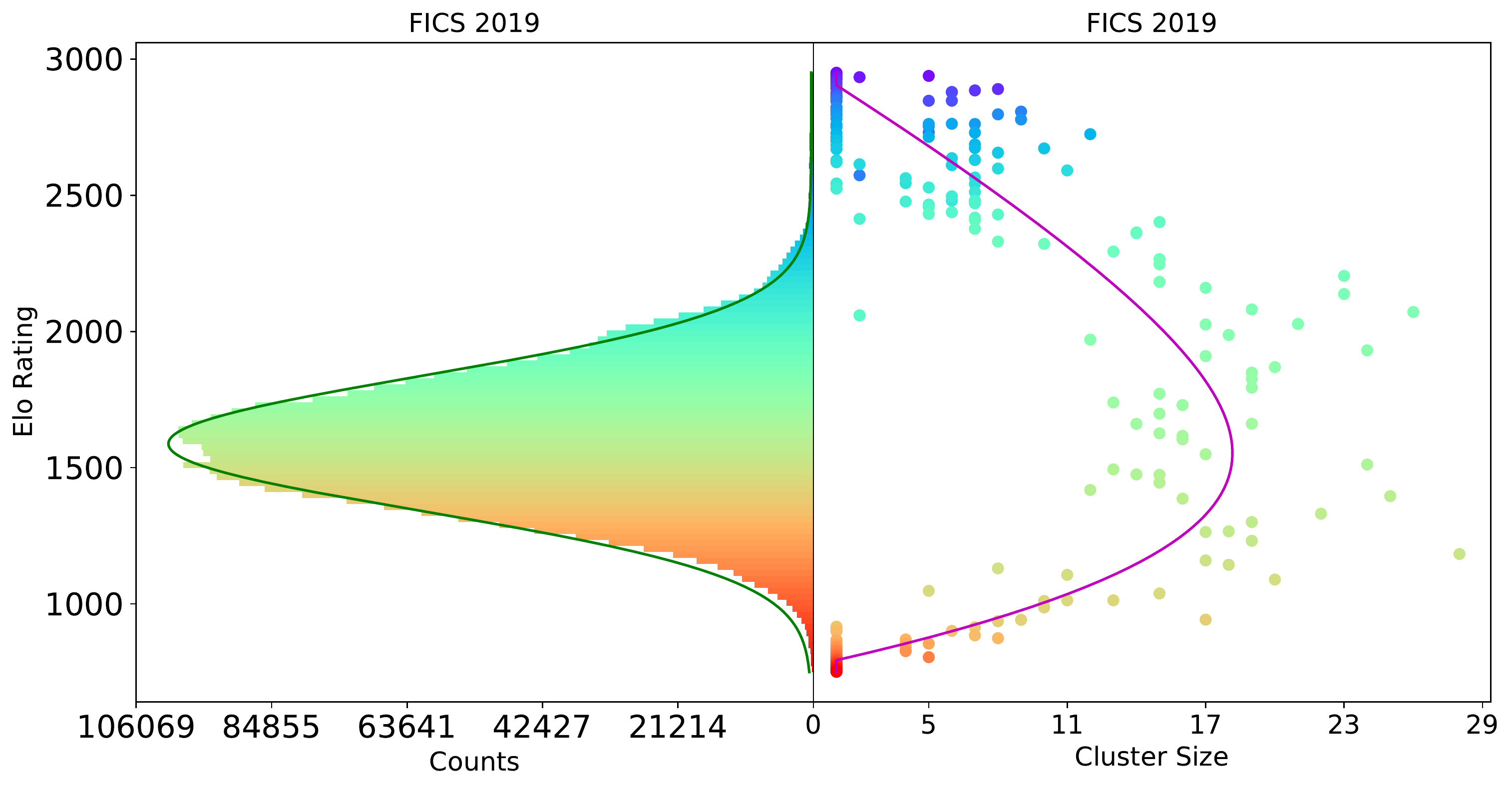} %
    \includegraphics[width = 0.1225\textwidth, height = 0.105 \textheight]{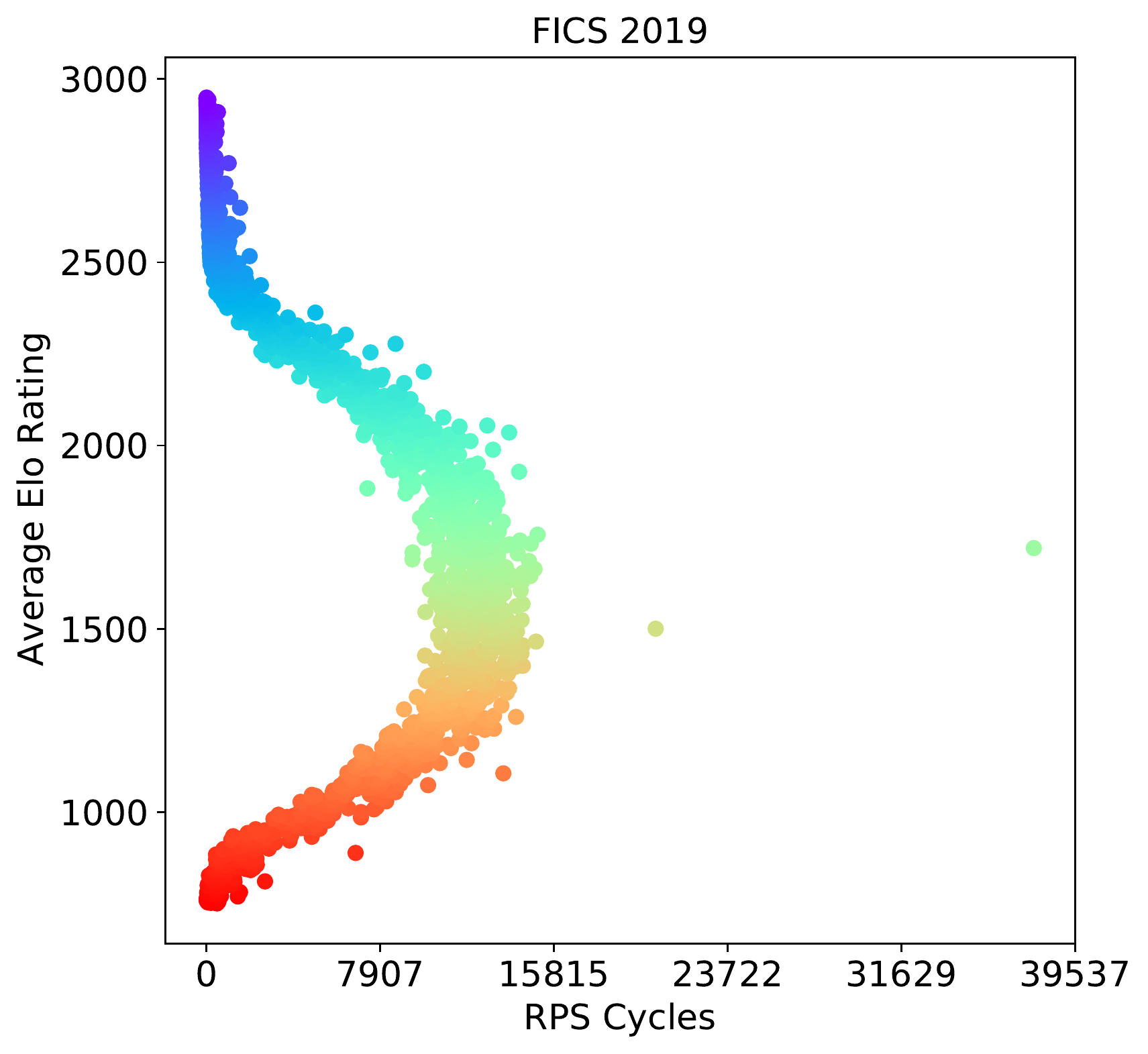} %
    \includegraphics[width = 0.1225\textwidth, height = 0.105 \textheight]{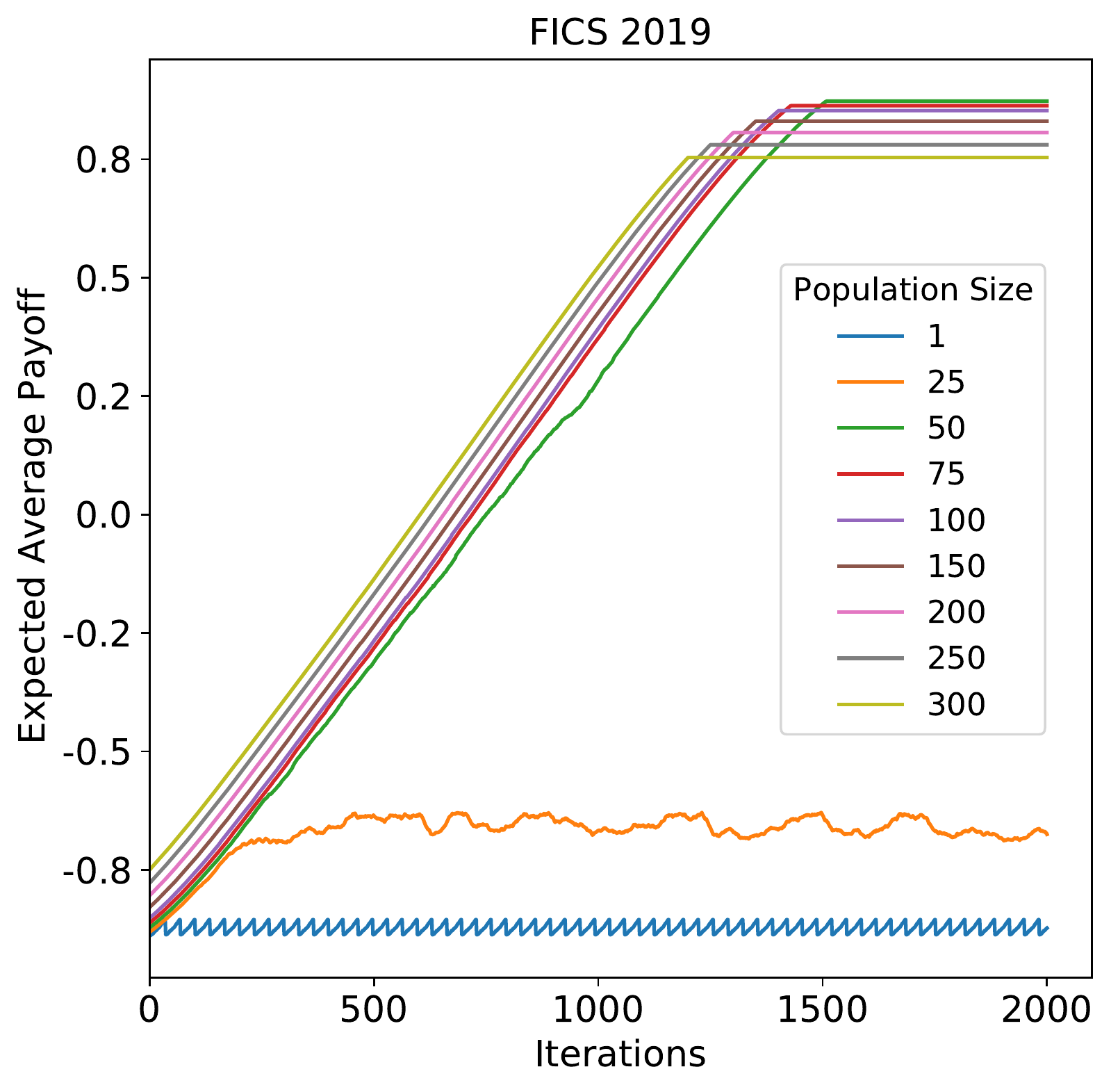}

    \caption{Histogram of player Elo ratings (first of each plot), Nash Clustering (second of each plot), RPS Cycles (third of each plot), and fixed-memory Fictitious Play (fourth of each plot) on Lichess 2013-2020 and FICS 2019 data.} 
    \Description{Histogram of player ratings, non-transitivity plots using RPS cycles counting and Nash Clustering, and training performance using Fixed-Memory Fictitious Play with various population sizes, conducted on games from Lichess 2013-2020 and FICS 2019.}
    \label{fig: full results}
\end{figure}
\FloatBarrier

\section{Two-Staged Sampling Algorithm} \label{appendix: two-staged sampling}
\begin{algorithm}[H]
\SetAlgoLined
\textbf{Inputs}: Set of $m$ objects, i.e., $U = \{U_1, ..., U_m\}$, number of objects to sample $d \in \mathbb{N}_1$; \\
Divide $U$ into $k$ chunks, i.e., $H_1, ..., H_k$, each of size $h$, i.e., $h \times k = m$, $h,k \in \mathbb{N}_1$, $H_i \cap H_j = \emptyset$ $\forall i \neq j$, $i,j \in [k]$, $\bigcup_{j \in [k]} H_j = U$; \\
Initialize $G = []$;\\
\For{$i \in [k]$}{
    Sample $d$ objects uniformly from $H_i$, forming $t$; \\
    Append all elements in $t$ to $G$;
}
Sample $d$ objects uniformly from $G$, forming $F$; \\
Output F as the set of $d$ objects sampled uniformly from $U$;
\caption{Two-Stage Uniform Sampling}
\label{algo: two-stage uniform sampling}
\end{algorithm}
\FloatBarrier

\noindent Algorithm \ref{algo: two-stage uniform sampling} can be proven to sample uniformly from $U$, i.e., the probability of any objects in $U$ being sampled into $F$ is $\frac{d}{m}$

\begin{proof}
For any object in $U$ to end up in $F$, they must first get picked into $G$. Since $H_i \cap H_j = \emptyset$ $\forall i,j \in [k]$, $i \neq j$, the probability that any object gets sampled into $G$ is $\frac{\Mycomb[h-1]{d-1}}{\Mycomb[h]{d}} = \frac{d}{h}$. Then given they are in $G$, they must get sampled into $F$. The probability that this happens is $\frac{\Mycomb[d \times k - 1]{d-1}}{\Mycomb[d \times k]{d}} = \frac{1}{k}$. Hence the probability that any object from $U$ gets sampled into $F$ is $\frac{d}{h \times k} = \frac{d}{m}$.
\end{proof}

\clearpage
\section{Results with Alternative Measure of Transitive Strength} \label{appendix: alternative measure results}

\FloatBarrier
\begin{figure}[H]
\vspace{-7.5pt} 
    \centering
    \includegraphics[width = 0.33\textwidth]{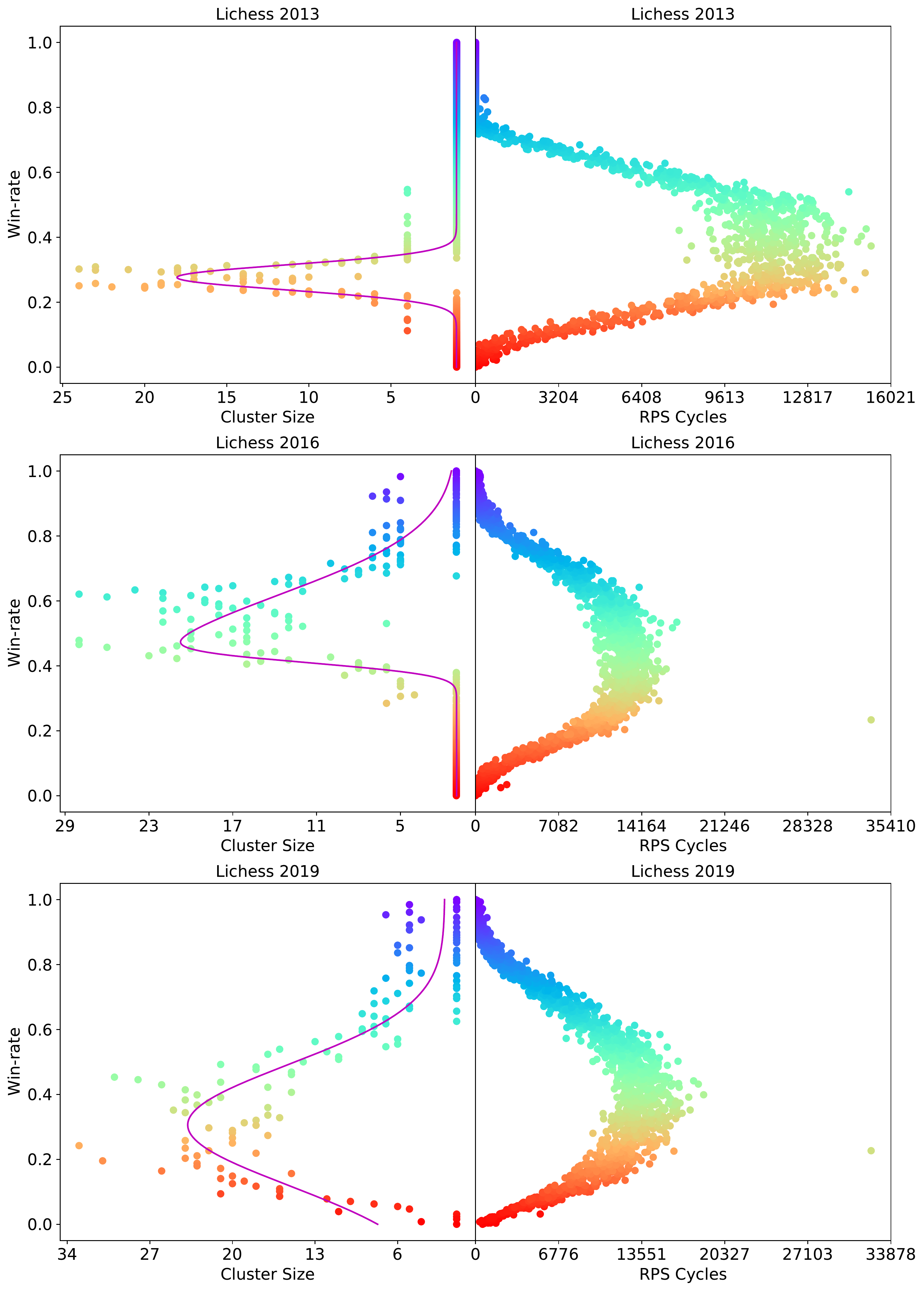} %
    \includegraphics[width = 0.33\textwidth]{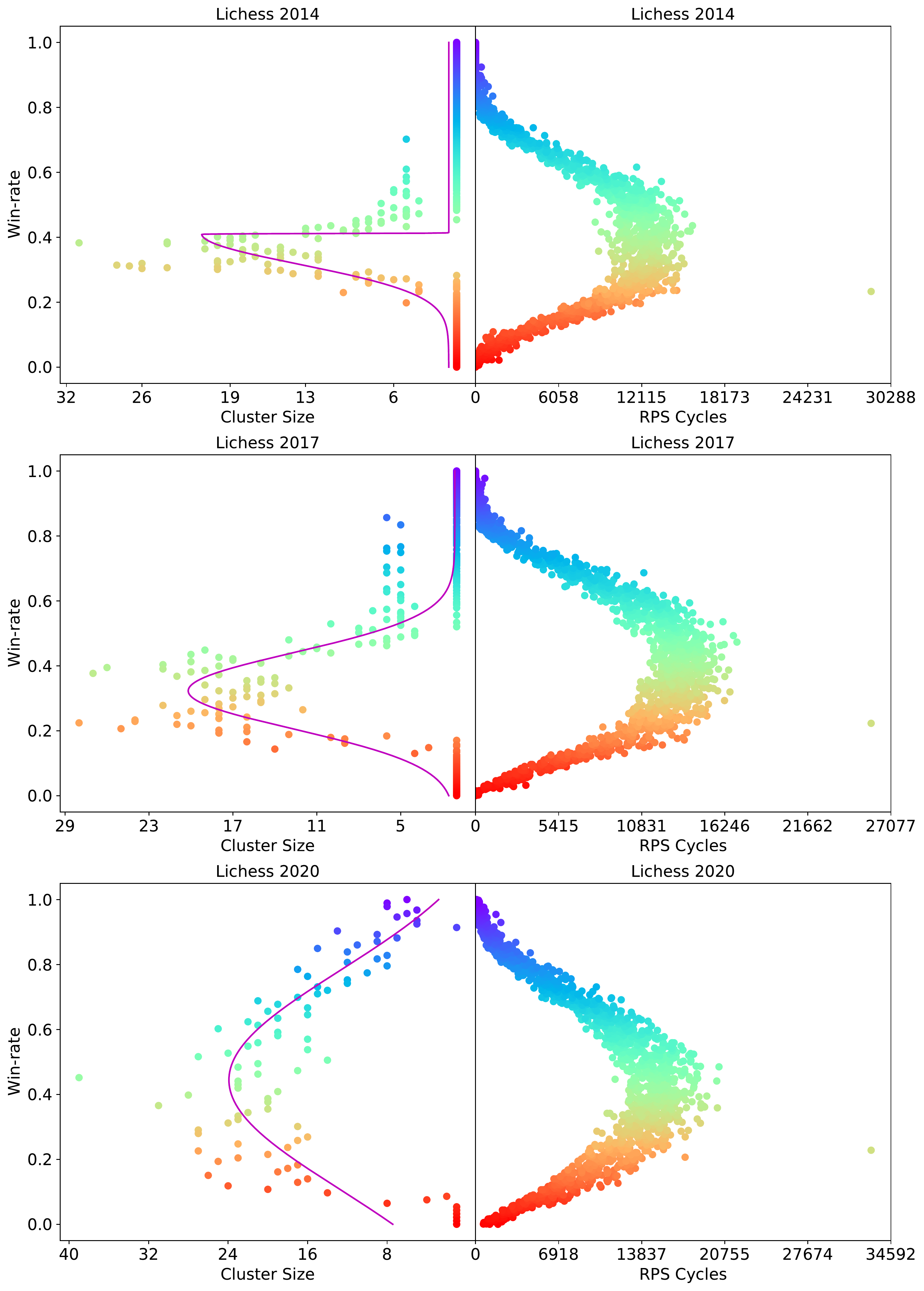} %
    \includegraphics[width = 0.33\textwidth]{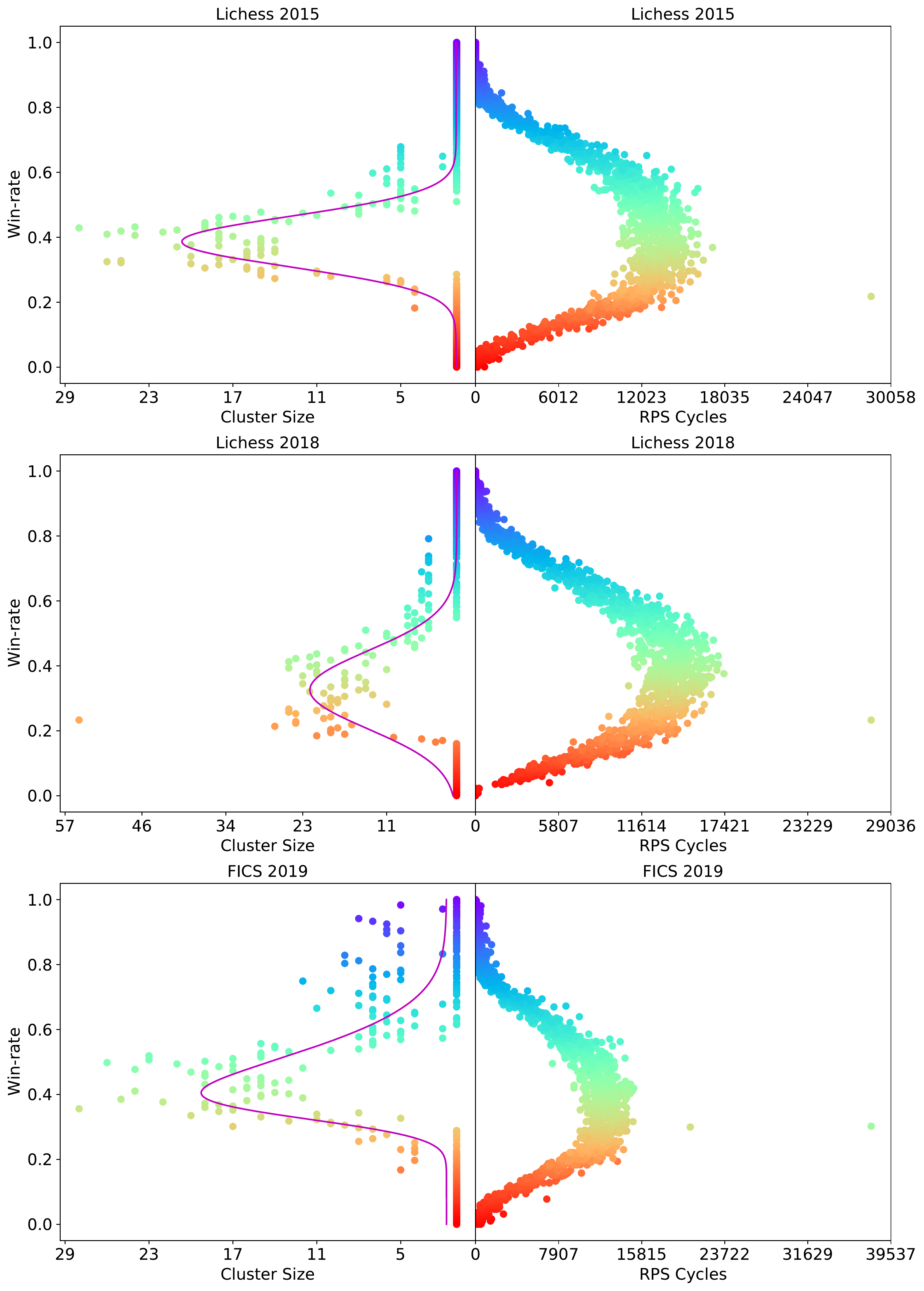}
    \vspace{-20pt}
    \caption{Measurements of Nash Clustering (left of each plot) and RPS Cycles (right of each plot) using win-rate as measure of transitive strength on Lichess 2013-2020 and FICS 2019 data.}
    \label{fig: appendix 2}
    \vspace{-10pt}
    \Description{Nash clustering and RPS cycles using win-rate as the measure of transitive strength and conducted on Lichess 2013-2020 games as well as FICS 2019 games.}
\end{figure}
\FloatBarrier


\section{Theorem Proofs} \label{appendix: theorem proofs}
\begin{theorem}
Let $C$ be the Nash Clustering of a two-player zero-sum symmetric game. Then $\text{NPP}(C_i, C_j) \geq 0$ $\forall i \leq j$, $C_i, C_j \in C$ and $\text{NPP}(C_i, C_j) \leq 0$ $\forall i > j$, $C_i, C_j \in C$.
\end{theorem}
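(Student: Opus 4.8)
The plan is to reduce the two-sided claim to a single inequality by antisymmetry, and then to obtain that inequality directly from the feasibility constraint of the max-entropy NE linear program in Equation \ref{Special Case Maxent NE LP}. First I would record that $\text{NPP}$ is antisymmetric: since $\text{NPP}(C_i,C_j)=\mathbf{p_i}^\top\boldsymbol{\mathcal{M}}\mathbf{p_j}$ is a scalar it equals its own transpose $\mathbf{p_j}^\top\boldsymbol{\mathcal{M}}^\top\mathbf{p_i}$, and because the game is symmetric, $\boldsymbol{\mathcal{M}}$ is skew-symmetric, so this equals $-\mathbf{p_j}^\top\boldsymbol{\mathcal{M}}\mathbf{p_i}=-\text{NPP}(C_j,C_i)$. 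Hence it suffices to prove $\text{NPP}(C_i,C_j)\ge 0$ whenever $i\le j$: the case $i>j$ then follows by negation, and $i=j$ gives $0$ on both sides (again by skew-symmetry).

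Next I would use the nested structure of Nash Clustering. Writing $X_k:=\bigcup_{r=k}^{\vert C\vert}C_r$, Definition \ref{Nash Clustering Definition} gives $X_{k+1}\subseteq X_k$, so for $i\le j$ we have $\text{supp}(\mathbf{p_j})\subseteq X_j\subseteq X_i$. Thus $\mathbf{p_j}$, viewed (zero-padded) as a distribution over $S_g$, is in fact a mixed strategy over the restricted strategy set $X_i$ on which $\mathbf{p_i}=\text{Nash}(\boldsymbol{\mathcal{M}}\mid X_i)$ is defined.

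The core step is the LP feasibility condition: since $\mathbf{p_i}$ is the (unique, by Theorem \ref{theorem: symmetric maxent ne}) max-entropy symmetric NE of the subgame on $X_i$, it satisfies $\boldsymbol{\mathcal{M}}\vert_{X_i}\,\mathbf{p_i}\le\mathbf{0}$, i.e. $(\boldsymbol{\mathcal{M}}\mathbf{p_i})_s\le 0$ for every pure strategy $s\in X_i$; no strategy in $X_i$ earns positive payoff against $\mathbf{p_i}$ played as column. Since $\mathbf{p_j}$ is a nonnegative combination supported on $X_i$, we get $\mathbf{p_j}^\top\boldsymbol{\mathcal{M}}\mathbf{p_i}=\sum_{s\in X_i}(\mathbf{p_j})_s(\boldsymbol{\mathcal{M}}\mathbf{p_i})_s\le 0$. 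Transposing the scalar and using skew-symmetry, $\text{NPP}(C_i,C_j)=\mathbf{p_i}^\top\boldsymbol{\mathcal{M}}\mathbf{p_j}=-\mathbf{p_j}^\top\boldsymbol{\mathcal{M}}\mathbf{p_i}\ge 0$, as desired.

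I do not expect a real obstacle; the only thing requiring care is bookkeeping---keeping the row/column convention of $\boldsymbol{\mathcal{M}}$ straight, tracking the sign flips coming from skew-symmetry, and noting that the inequality $(\boldsymbol{\mathcal{M}}\mathbf{p_i})_s\le 0$ is guaranteed only on the restricted set $X_i$, which is exactly where $\mathbf{p_j}$ is supported, so nothing more is needed. Equivalently, one can phrase the core step as ``$\mathbf{p_i}$ secures payoff $\ge 0$ against every mixed strategy supported on $X_i$,'' which is just the transposed form of the LP constraint and yields the same one-line conclusion.
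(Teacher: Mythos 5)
Your proposal is correct and is essentially the paper's own argument: both rest on the fact that $\mathbf{p_i}$ is a symmetric NE of the subgame restricted to $\bigcup_{r\ge i}C_r$, that $\mathbf{p_j}$ is supported inside that restricted set for $j\ge i$, and on skew-symmetry of $\boldsymbol{\mathcal{M}}$ to turn the resulting inequality into the stated sign conditions. The only cosmetic difference is that you invoke the LP feasibility constraint $\boldsymbol{\mathcal{M}}\mathbf{p_i}\le\mathbf{0}$ and dispatch the $i>j$ case by antisymmetry, whereas the paper writes out the two best-response inequalities of the NE and treats each direction separately.
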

\begin{proof}
Let $C$ be the Nash Clustering of a two-player zero-sum symmetric game with a payoff matrix of $\boldsymbol{\mathcal{M}}$ and strategy space $S_g$ for each of the two players. For any $k \in [\vert C \vert]$, let $\mathbf{p_k}$ be the NE solved to obtain the $k^{th}$ cluster, i.e., $\mathbf{p_k} = \text{Nash}(\boldsymbol{\mathcal{M}} \vert \bigcup_{i = k}^{\vert C \vert} C_i)$ where $\mathbf{p_k} \in \Delta_{\vert C \vert}$. Since $\mathbf{p_k}$ is restricted to exclude $\bigcup_{i=1}^{k-1} C_i$, therefore $\mathbf{p_k}$ would be all zeroes at the positions corresponding to all pure strategies $s \in \bigcup_{i = 1}^{k-1} C_i$. Let $X_k = \bigcup_{i = 1}^{k-1} C_i$ and $\boldsymbol{\mathcal{M}}_{-X_k}$ be the payoff matrix of the game but excluding the rows and columns that corresponds to the pure strategies in $X_k$. Also let $\mathbf{p_k}^{X_k}$ denote the vector $\mathbf{p_k}$ but removing all entries at positions corresponding to the pure strategies in $X_k$. Hence $\boldsymbol{\mathcal{M}}_{-X_k} \in \mathbb{R}^{\vert S_g \backslash X_k \vert \times \vert S_g \backslash X_k \vert}$ and $\mathbf{p_k}^{X_k} \in \Delta_{\vert S_g \backslash X_k \vert}$. Furthermore, it is also true that $\mathbf{p}^\top \boldsymbol{\mathcal{M}} \mathbf{p} = (\mathbf{p}^{X_k}) ^\top \boldsymbol{\mathcal{M}}_{-X_k} \mathbf{p}^{X_k}$ for any $\mathbf{p} \in \Delta_{\vert S_g \vert}$ provided that the entries of $\mathbf{p}$ at positions corresponding to pure strategies in $X_k$, are zeroes. \\
By definition of NE in \cite{daskalakis_lec02}, it is true that $(\mathbf{p_k}^{X_k}) ^\top \boldsymbol{\mathcal{M}}_{-X_k} \mathbf{p_k}^{X_k} \leq (\mathbf{p_k}^{X_k}) ^\top \boldsymbol{\mathcal{M}}_{-X_k} \mathbf{p'} \ \forall \mathbf{p'} \in \Delta_{\vert S_g \backslash X_k \vert}$ and $(\mathbf{p_k}^{X_k}) ^\top \boldsymbol{\mathcal{M}}_{-X_k} \mathbf{p_k}^{X_k} \geq \mathbf{p'}^\top \boldsymbol{\mathcal{M}}_{-X_k} \mathbf{p_k}^{X_k}$ $\forall \mathbf{p'} \in \Delta_{\vert S_g \backslash X_k \vert}$.
Now consider two clusters $C_i$ and $C_j$ where $i < j$ and $C_i, C_j \in C$. By the first definition of NE earlier, 
$(\mathbf{p_i}^{X_i} )^\top \boldsymbol{\mathcal{M}}_{-X_i} \mathbf{p_i}^{X_i} \leq  (\mathbf{p_i}^{X_i} )^\top \boldsymbol{\mathcal{M}}_{-X_i} \mathbf{p'} \ \forall \mathbf{p'} \in \Delta_{\vert S_g \backslash X_i \vert}$
which implies $(\mathbf{p_i}^{X_i} )^\top \boldsymbol{\mathcal{M}}_{-X_i} \mathbf{p_i}^{X_i} \leq (\mathbf{p_i}^{X_i} )^\top \boldsymbol{\mathcal{M}}_{-X_i} \mathbf{p_j}^{X_i}$ and thus equivalently $\mathbf{p_i}^\top \boldsymbol{\mathcal{M}} \mathbf{p_i} \leq \mathbf{p_i}^\top \boldsymbol{\mathcal{M}} \mathbf{p_j}$. Then by property of skew-symmetric matrices, $\mathbf{p_i}^\top \boldsymbol{\mathcal{M}} \mathbf{p_i} = 0$, hence $\mathbf{p_i}^\top \boldsymbol{\mathcal{M}} \mathbf{p_j} \geq 0$, i.e., $\text{NPP}(C_i, C_j) \geq 0$. Furthermore, by the second definition of NE earlier $(\mathbf{p_i}^{X_i})^\top \boldsymbol{\mathcal{M}}_{-X_i} \mathbf{p_i}^{X_i} \geq \mathbf{p'}^\top \boldsymbol{\mathcal{M}}_{-X_i} \mathbf{p_i}^{X_i}\ \forall \mathbf{p'} \in \Delta_{\vert S_g \backslash X_i \vert}$, which implies $(\mathbf{p_i}^{X_i})^\top \boldsymbol{\mathcal{M}}_{-X_i} \mathbf{p_i}^{X_i} \geq (\mathbf{p_j}^{X_i})^\top \boldsymbol{\mathcal{M}}_{-X_i} \mathbf{p_i}^{X_i}$ and thus equivalently $\mathbf{p_i}^\top \boldsymbol{\mathcal{M}} \mathbf{p_i} \geq \mathbf{p_j}^\top \boldsymbol{\mathcal{M}} \mathbf{p_i}$. Finally, using the property of skew-symmetric matrices again, $\mathbf{p_j}^T \boldsymbol{\mathcal{M}} \mathbf{p_i} \leq 0$, i.e., $\text{NPP}(C_j, C_i) \leq 0$.
\end{proof}

\begin{theorem}
For an adjacency matrix $\boldsymbol{\mathcal{A}}$ where $\boldsymbol{\mathcal{A}}_{i,j} = 1$ if there exists a directed path from node $n_i$ to node $n_j$, the number of paths of length $k$ that starts from node $n_i$ and ends on node $n_j$ is given by $(\boldsymbol{\mathcal{A}}^k)_{i,j}$, where length is defined as the number of edges in that path.
\end{theorem}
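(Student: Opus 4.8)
The plan is a straightforward induction on the path length $k$, with $(\boldsymbol{\mathcal{A}}^k)_{i,j}$ interpreted throughout as counting \emph{walks} of length $k$ — the word ``path'' being used here in the weak sense that permits repeated nodes and edges (this is precisely why, as the remark following the theorem observes, the diagonal of $\boldsymbol{\mathcal{A}}^n$ over-counts genuine cycles once $n > 3$). Note also that the hypothesis ``$\boldsymbol{\mathcal{A}}_{i,j} = 1$ if there exists a directed path from $n_i$ to $n_j$'' should be read as referring to a single directed edge, i.e. a path of length $1$, consistent with the convention that length counts edges.

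First I would dispose of the base case $k = 1$: by definition $(\boldsymbol{\mathcal{A}}^1)_{i,j} = \boldsymbol{\mathcal{A}}_{i,j}$, which is $1$ exactly when there is a directed edge from $n_i$ to $n_j$ and $0$ otherwise, and a length-$1$ walk from $n_i$ to $n_j$ is precisely such an edge. For the inductive step, suppose the claim holds for some $k \geq 1$. Every walk of length $k+1$ from $n_i$ to $n_j$ decomposes uniquely as a walk of length $k$ from $n_i$ to the second-to-last node $n_l$, followed by the edge $n_l \to n_j$; conversely, concatenating any length-$k$ walk $n_i \to n_l$ with any edge $n_l \to n_j$ produces a length-$(k+1)$ walk $n_i \to n_j$. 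Summing over the intermediate node, the number of length-$(k+1)$ walks from $n_i$ to $n_j$ equals $\sum_{l} (\boldsymbol{\mathcal{A}}^k)_{i,l}\,\boldsymbol{\mathcal{A}}_{l,j}$, where $(\boldsymbol{\mathcal{A}}^k)_{i,l}$ counts the prefixes by the inductive hypothesis and $\boldsymbol{\mathcal{A}}_{l,j}$ records whether the closing edge exists; this sum is exactly $(\boldsymbol{\mathcal{A}}^k \boldsymbol{\mathcal{A}})_{i,j} = (\boldsymbol{\mathcal{A}}^{k+1})_{i,j}$, which closes the induction.

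There is no substantial obstacle here; the one place that warrants care is verifying that the decomposition ``length-$k$ prefix plus final edge'' is a genuine bijection between length-$(k+1)$ walks from $n_i$ to $n_j$ and pairs consisting of a length-$k$ walk ending at some $n_l$ together with an edge $n_l \to n_j$: the second-to-last node $n_l$ is uniquely determined by the walk, so nothing is double-counted or omitted, and crucially $n_l$ is allowed to coincide with $n_i$, with $n_j$, or with earlier nodes of the walk — which is exactly why the quantity counts walks rather than simple paths, and hence why the subsequent use of the theorem is restricted to $k = 3$.
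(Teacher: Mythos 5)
Your proof is correct and follows essentially the same route as the paper's: induction on $k$ with base case $k=1$ and an inductive step that decomposes a length-$(k+1)$ walk into a length-$k$ prefix plus a final edge, matching the entry $\sum_{l}(\boldsymbol{\mathcal{A}}^k)_{i,l}\boldsymbol{\mathcal{A}}_{l,j}$ of the matrix product. Your added care about the walk-versus-simple-path distinction and the bijectivity of the decomposition is a welcome clarification but does not change the argument.
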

\begin{proof}
By induction, the base case is when $k = 1$. A path from node $n_i$ to node $n_j$ of length $1$ is simply a direct edge connecting $n_i$ to $n_j$. The presence of such edge is indicated by $\boldsymbol{\mathcal{A}}_{i,j}$ which follows from the definition of adjacency matrix, thereby proving the base case. The inductive step is then to prove that if $(\boldsymbol{\mathcal{A}}^k)_{i,j}$ is the number of paths of length $k$ from node $n_i$ to $n_j$ for some $k \in \mathbb{N}_1$, then $(\boldsymbol{\mathcal{A}}^{k+1})_{i,j}$ is the number of paths of length $k+1$ from node $n_i$ to $n_j$. This follows from the matrix multiplication. Let $\boldsymbol{\mathcal{R}} = \boldsymbol{\mathcal{A}}^{k+1} = \boldsymbol{\mathcal{A}}^k \boldsymbol{\mathcal{A}}$. Then $\boldsymbol{\mathcal{R}}_{i,j} = \sum_{r = 1}^{\vert S_g \vert} \boldsymbol{\mathcal{A}}^k_{i,r} \boldsymbol{\mathcal{A}}_{r,j}$. The inner multiplicative term of the sum will only be non-zero if both $\boldsymbol{\mathcal{A}}^k_{i,r}$ and $\boldsymbol{\mathcal{A}}_{r,j}$ are non-zero, indicating that there exists a path of length $k$ from node $n_i$ to some node $n_r$, and then a direct edge from that node $n_r$ to $n_j$. This forms a path of length $k+1$ from node $n_i$ to $n_j$. Therefore, $\boldsymbol{\mathcal{R}}_{i,j}$ counts the number of paths of length $k+1$ from node $n_i$ to $n_j$, proving the inductive hypothesis.
\end{proof}

\end{document}